\documentclass[twoside]{IEEEtran}





\usepackage[utf8]{inputenc} 
\usepackage[T1]{fontenc}    
\usepackage{hyperref}       
\usepackage{url}            
\usepackage{booktabs}       
\usepackage{amsfonts}       
\usepackage{nicefrac}       
\usepackage{microtype}      
\usepackage{times}
\usepackage{hyperref}
\usepackage{url}
\usepackage{bbm}
\usepackage{algorithm,algpseudocode}
\usepackage{booktabs}       
\usepackage{amsfonts}       
\usepackage{nicefrac}       
\usepackage{microtype}      

\usepackage{dsfont}
\usepackage{color}
\usepackage{diagbox}
\usepackage{array,multirow,graphicx}
\usepackage{mathtools}
\usepackage{amsthm,amsmath,amssymb,graphicx}%
\usepackage{enumitem}
\setlist{nosep}
\usepackage{thmtools}
\usepackage{thm-restate}
\usepackage{subfiles}	
\usepackage{subcaption}


\DeclareMathOperator*{\argmax}{arg\,max}

\DeclareMathOperator*{\support}{supp}

\newtheorem{theorem}{Theorem}

\newtheorem{assumption}{Assumption}

\DeclareMathOperator*{\E}{\mathbb{E}}
\newcommand{\Ave}[1]{\E_{#1}}


\def\Wmat{A}
\newcommand{\WeightMatrix}[1]{\Wmat^{(#1)}}
\newcommand{\W}[1]{\WeightMatrix{#1}}

\newcommand{\bias}[1]{b^{(#1)}}

\def\width{\Omega}

\def\sign{sign}
\def\defprednet{\sign\circ\net}
\def\prednet{\net^{\sign}_w}

\newcommand{\sig}[1]{\sigma^{(#1)}}
\def\sigl{\sig{l}}
\def\sigbar{\bar{\sigma}}

\def\net{\mathcal{N}}

\def\WeightSpace{\mathcal{W}}
\def\PathSpace{\mathcal{F}}

\def\bbR{\mathbb{R}}

\def\pdata{\mathcal{D}}

\newcommand{\risk}[2]{R_{#1}(#2)}

\usepackage{bbm}
\newcommand{\indicator}[1]{\mathbbm{1}_{\{#1\}}}
\def\wpath{\bar{w}}

\def\trainS{S^{(m)}}
\def\trainSsup{S^{(s)}}
\def\trainSrest{S^{(m-s)}}
\newcommand{\samplex}[1]{x^{#1}}
\newcommand{\sampley}[1]{y^{#1}}
\def\samplexj{\samplex{j}}
\def\sampleyj{\sampley{j}}
\def\trainset{\{(\samplex{j},\sampley{j})\}_{j=1}^m}
\def\trainembed{\{(\phi(\samplex{j},w),\sampley{j})\}_{j=1}^m}

\usepackage{soul,xcolor}
\setstcolor{red}

\newcommand{\norm}[1]{\lVert #1\rVert}

\def\Azte{Assumption \ref{asm:zte}}
\def\Aone{Assumption \ref{asm:mm_relax}}

\def\pathindex{{i_d,\ldots ,i_1,i_0}}
\def\neuronindex{i_d,\ldots ,i_1}

\def\indexbounds{i_l\in [b] \text{ for } l\in [d] \text{ and } i_0\in[f]}

\def\choosemany{\binom{m}{s}}

\def\InfoSpace{\mathcal{M}}
\def\HypSpace{\mathcal{H}}

\def\hyp{\xi}
\def\idx{\textbf{i}}
\def\tset{\trainS}
\def\isp{\InfoSpace}
\newcommand{\Prob}[1]{\operatorname*{\mathbb{P}r}\limits_{\tset\sim\pdata^m}\left[#1\right]}
\def\Risk{\risk{\pdata}{\prednet}}

\def\skisp{\pi}
\def\pth{j}
\def\pthhat{j_{\net}}
\def\Is{I^{(s)}}
\def\Pi{P^{I}}
\def\Ps{P^{\sigma}}
\def\Ppi{P^{\skisp}}
\def\Qi{Q^{I}}
\def\Qs{Q^{\sigma}}
\def\Qpi{Q^{\skisp}}
\def\Ron{\mathcal{R}}
\def\Uniform{Uniform}
\def\cond{C_{MM}}
\def\pthh{\pth^{\textnormal{th}}}

\begin{document}

\title{Sample Compression, Support Vectors, and Generalization in Deep Learning}

\author{Christopher~Snyder,~\IEEEmembership{Student Member,~IEEE,}
        and~Sriram~Vishwanath,~\IEEEmembership{Senior~Member,~IEEE}
\thanks{Chris Snyder and Sriram Vishwanath are with the Department
of Electrical and Computer Engineering, University of Texas, Austin USA e-mail: christopher.g.snyder@utexas.edu, sriram@utexas.edu}}


\maketitle

\begin{abstract}
Even though Deep Neural Networks (DNNs) are widely celebrated for their practical performance, they possess many intriguing properties related to depth that are difficult to explain both theoretically and intuitively. Understanding how weights in deep networks coordinate together across layers to form useful learners has proven challenging, in part because the repeated composition of nonlinearities has proved intractable. This paper presents a reparameterization of DNNs as a linear function of a  feature map that is {\em locally independent} of the weights. This feature map transforms depth-dependencies into simple {\em tensor} products and maps each input to a discrete subset of the feature space. Then, using a max-margin assumption, the paper develops a {\em sample compression} representation of the neural network in terms of the discrete activation state of neurons induced by $s$ ``support vectors''. The paper shows that the number of support vectors $s$ relates with learning guarantees for neural networks through sample compression bounds, yielding a sample complexity of $\mathcal{O}(ns/\epsilon)$ for networks with $n$ neurons. Finally, the number of support vectors  $s$ is found to monotonically increase  with width and label noise but decrease with depth.
\end{abstract}

\begin{IEEEkeywords}
Deep Neural Networks, Sample Compression, Generalization 
\end{IEEEkeywords}

\section{INTRODUCTION}
Neural networks represent an intriguing class of models that have achieved state-of-the-art performance results for many machine learning tasks. Although neural networks have been studied for over half a century \cite{McCulloch1943}, the variations which have recently garnered interest are called ``deep" neural networks (DNNs). Deep learning is characterized by stacking one layer after another and using the  computational power of modern graphical processor units (GPUs) or custom processors/ASICs to train them.  It is shown experimentally that such networks with more layers tend to generalize better 
\cite{novak2018} 
\cite{Neyshabur2018}. 

Thus, deep learning presents a scenario where the best performing models are also generally poorly understood. Improvements to our theoretical understanding of deep neural networks will aid in structured, principled approaches to the design, analysis, and use of such networks. An important step in understanding a model is to prove generalization bounds that agree with performance in practice. For DNNs, several attempts have been made in this direction based on margin, perturbation, PAC-Bayes, or complexity type approaches (see related work section for further details). 

Proving generalization bounds for our existing models helps us to build better models in the future by forcing us to articulate and analyze what exactly it is about DNNs that makes them work. From classical learning theory, our intuitions is to associate regularity with small parameter number or norm penalization. We expect that discussion of the interplay between depth and generalization of unregularized networks will be particularly fruitful exactly because it runs contrary to these intuitions. We seek to add an alternative perspective by compressing the DNN classifier as one of the solution to one of a small number of optimization problems, side-stepping the need to discuss the norm of various weight matrices or the architecture dimensions explicitly.

In this paper, under assumptions presented in Section \ref{sec:assumptions}, we recast Leaky-ReLU type networks as an equivalent support vector machine (SVM) problem where the features correspond to paths through the network and the embedding map $\phi$ has local invariance to perturbation of the weights of the DNN. Though this embedding is a non-trivial function of these weights, the induced kernel has a simple interpretation as the inner product in the input space scaled by the number of shared paths in the network. 

Our main contributions can be summarized as:

\begin{enumerate}
\item{We present a framework for recasting neural networks with two-piecewise-linear nonlinearities (such as ReLU) as an SVM problem where classification with the network is equivalent to linear classification in a particular tensor space. Here, the corresponding embedding of training points is insensitive to local perturbations of weights.}
\item{We introduce for study a particular type of DNN satisfying a max-margin assumption. These networks can be compressed as the solution to an optimization problem determined by a few samples (much like SVMs). We point through analogy with unregularized logistic regression (in the feature space) that we may expect DNNs trained with unregularized gradient to satisfy this assumption in the training limit. We show empirically that modifying trained DNNs to satisfy this max-margin assumption essentially leaves their predictions unchanged.}

\item{Under a max-margin assumption on the network within this new feature space, we define  ``network support vectors'': those training samples that are mapped to support vectors under the learned embedding map. An important consequence of this max-margin assumption is that only \textit{finitely many} neural network classifiers correspond to a set of network support vectors.}
\item{We show that the number of network support vectors, $s$, can be related theoretically to generalization and experimentally to network architecture. We use a sample compression variant of PAC-Bayes to prove in Theorem \ref{thm:pbsc} a bound on the sample complexity of max-margin networks with $n$ neurons of $\mathcal{O}(ns)$. We show that the quantity, $ns$, experimentally decreases with depth (despite $n$ increasing with depth). We expect the quantity $n$ to be subject to further improvement.}
\end{enumerate}

\section{RELATED WORK}
There have been multiple, well-thought-out efforts to model, characterize and understand generalization error in DNNs. One well-studied direction is to impose a sufficiently small norm condition on the neural network weights
 \cite{Golowich2017}
 \cite{Neyshabur2015}. 
Since the weight norms induce a bound on the network's Lipschitz constant, one can connect this with  insensitivity of the network output to input perturbations, either through a product of weight spectral norms 
\cite{Bartlett2017} 
or through the norm of the network Jacobian itself 
\cite{Sokolic2017}. 
Instead of invariance to input perturbations, one can also consider the degree of invariance of the network dependence to weight perturbations \cite{Neyshabur2017}. 
A natural way to concretely relate such perturbation schemes to generalization error is through the means of probably approximately correct PAC-Bayes analysis as in
\cite{McAllester1999} 
\cite{McAllester2013}. 

The general principle underlying PAC-Bayes analysis is to characterize (in bits, with respect to some prior) the degree of precision in specifying the final neural network weights in order to realize the observed training performance. Such PAC-Bayes based generalization bounds were applied successfully to the study of neural networks by \cite{Langford2002} and more recently by \cite{Dziugaite2017}, albeit for stochastic networks. Generally speaking, if multiple weights corresponding to a large neighborhood result in similar neural network behavior, then fewer bits are needed to specify these weights. Overall, insensitivity to weight perturbation is one potential manner to formalize the popular high-level idea that "flat minima generalize well" \cite{Neyshaburc,Hochreiter1997}.

In order to correctly reproduce the improvement in generalization observed in deep learning with each additional layer, the principle difficulty is that these approaches must make layer-wise considerations (either of each weight matrix or each layer-wise computation) that accumulate and grow the generalization bound as depth increases. Of course, it is possible to find suitable assumptions that control or mitigate this depth-dependent growth as in 
\cite{Golowich2017} 
or \cite{Kawaguchi2017}. 
Given this challenge,  other "network compression" type approaches that characterize the network function without addressing every individual parameter are gaining interest. For example, 
\cite{Neyshabur2014} 
analyzes the number of nonzero weights as a form of capacity control, while others have studied approximating a deep network by a "compressed" version with fewer nonzero weights
\cite{Arora2018} 
\cite{Zhou2018}. 

In this paper, we use a \textit{sample} compression 
\cite{Littlestone1986} 
representation approach for understanding neural network depth-dependence. We transform the neural network into a related SVM problem, then recover the network function from (suitably defined) support vectors. Sample compression characterizes PAC learnable functions as those that can be recovered from a small enough subset of training samples 
\cite{Floyd1995}. 
This theory readily finds application in kernel modeling, for example 
\cite{Germain2011}, 
since support vectors provide a natural correspondence between (max-margin) hypotheses data subsets. Our own work makes consistent use of machinery developed by Laviolette 
\cite{Laviolette2007} 
who significantly increased the applicability of sample-compression theory by allowing model reconstructions to depend additionally on "side channel" information.

\section{ON NEURAL NETWORKS AS SUPPORT VECTOR MACHINES}
\subsection{Notation Definitions and Setting}

In this paper, we consider the family of nonlinearities  $\rho(x)=\beta x\indicator{x<0}(x)+\gamma x\indicator{x\geq0}(x)$ for $\beta,\gamma\in\bbR$ for the neural network, which encompasses ReLU, LeakyReLU, and absolute value as examples. We will refer to these nonlinearities collectively as "Leaky-ReLU". For vector arguments, $\rho$ is understood to be applied element-wise. We do not use biases. For integer $m$, we will use $[m]$ to mean the set $\{1,\ldots,m\}$.

Consider a neural network with $d$ (\underline{d}epth) hidden layers, width $\width$  neurons in \underline{l}ayer $l$, $f$ input \underline{f}eatures, and $m$ training sa\underline{m}ples. 

We will use $\WeightSpace=\bbR^{\width}\times(\prod_{i=1}^{d-1}\bbR^{\width\times\width})\times\bbR^{\width\times f}$ to denote the set of all possible weights within the neural network. Here $\WeightMatrix{l}_{i_{l+1},i_l}$ refers to the scalar weight from neuron $i_l$ in $l$ to neuron $i_{l+1}$ in layer $l+1$. We use $w$ to refer to \textit{all of the weights collectively}, with $w=(\WeightMatrix{d},\ldots,\WeightMatrix{1},\WeightMatrix{0})\in\WeightSpace$. Each $w$ corresponds to a neural network mapping from each $x\in\mathcal{X}\triangleq\bbR^f$ to $\bbR$ as follows:

\begin{equation}
\net(x,w)\triangleq\net_w(x)\triangleq\WeightMatrix{d}\rho(\WeightMatrix{d-1}\rho(\ldots(\WeightMatrix{1}\rho(\WeightMatrix{0}x)\ldots))
\end{equation}

We distinguish between $\net_w:\mathcal{X}\mapsto\bbR$, which returns scalar values, and the related classifier returning labels, $\prednet\triangleq\defprednet_w:\mathcal{X}\mapsto\mathcal{Y}$. Here $\mathcal{Y}\triangleq\{-1,+1\}$ and $\sign(\cdot)$ is a function returning the sign of its argument (defaulting to $+1$ for $0$ input).
For a data distribution $\pdata$ on $\mathcal{X}\times\mathcal{Y}$, the goal is to use a training set $\trainS=\trainset\sim\mathcal{D}^m$ to learn a set of weights $w$ so that $\prednet$ has small probability of misclassification on additional samples drawn from $\mathcal{D}$.

We define a {\em path} (in a neural network) to be an element of 
$(\prod_{i=1}^{d}[\width])\times[f]$, corresponding to a choice of $1$ neuron per hidden layer and $1$ input feature. Sometimes it is convenient to refer to these input features as neurons in layer $l=0$. Thus, one says that the path $\pathindex$ traverses neuron $i_l$ in layer $l=0,1,\ldots,d$.

Given a set of weights $w$, we define $\Lambda(w)$ to be the path-indexed vector with the product of weights along path $p$ in position $p$. Often we use $\wpath$ to shorten $\Lambda(w)$, and we use $\wpath_p$ or $\wpath_{\pathindex}$ when we want to specify the path.

\subsection{A Reparameterization of the Network}\label{sec:reparameterization}

Consider the set of all paths starting from some feature in the input and passing through one neuron per hidden layer of a ReLU neural network. Index these $f\width^d$ many paths by the coordinate tuple $(i_d,\ldots,i_1,i_0)$ to denote the path starting at feature $i_0$ in the input and passing through neuron $i_l$ in hidden layer $l$. Given a set of network weights $w$, we can define $\Lambda(w)=\wpath=\wpath_{\pathindex}$, whose $(\pathindex)^{th}$ coordinate is the product of weights along path $(\pathindex)$. Inspired by \cite{Kawaguchi2017}, (who used a similar factorization without exploring the connections with support vector machines) we note that the output of a neural network can be viewed as a sum of contributions over paths
\[
\net(x,w)=\sum_{p=(\pathindex)} \sig{d}(x,w)_{i_d}\cdots\sig{1}(x,w)_{i_1} x_{i_0}\wpath_{p}
\]
where $\sig{l}(x,w)$ is an indicator vector for which neurons in layer $l$ are active for input $x$ with weights $w$. For convenience, we also define $\sigbar(x,w)=\sigbar(x,w)_{\neuronindex}=\sig{d}(x,w)_{i_d}\cdots\sig{1}(x,w)_{i_1}$, which is also an indicator but over paths instead of neurons\footnote{Leaky-ReLU units scale inputs by $\beta$ or $\gamma$ in place of $0$ or $1$, hence the $\sigl$ are no longer literally indicator functions}. 
The above summation over all tuples $(\pathindex)$ can be interpreted as an inner product $\langle \phi(x,w),\wpath\rangle$ where 
\begin{equation}\label{eqn:phi}
\phi(x,w)_{\pathindex}=\sig{d}(x,w)_{i_d}\cdots\sig{1}(x,w)_{i_1}x_{i_0} 
\end{equation}
is a $w$-parameterized family of embedding maps from the input to a feature space we denote as the "Path Space" $\PathSpace$, i.e., the set of all tensors assigning some scalar to each path index-tuple $\pathindex$ with $\indexbounds$. The neural network then is \textit{almost} a kernel classifier in that the model only interacts with the input through inner products with a feature map $\phi(x,w)$. Though unlike a SVM, the feature map has some dependence on $w$.

\def\CondStop{C_{STOP}}
\def\Ballw{B_\epsilon(w)}
\def\netrestrict{\net\restriction_{\trainS\times B_\epsilon(w)}}
\def\netparmrest{\net\restriction_{B_\epsilon(w)}}
\def\wfinal{w_f}
An important insight is that, over small regions of the weight space, our embedding $\phi(x_i,w)$ does not depend on $w$ for any of the finitely many training points. More precisely, suppose that none of the pre-nonlinearity activations of neurons in $\net$ are \textit{identically} zero. Then for each training sample and each neuron pre-activation, we obtain an open ball about this pre-activation (excluding zero). Since the function from the weights to each pre-activation is continuous, the preimage of each ball in the weight space is open. The intersection of these (finitely many) preimages is an open set around the current network weights in which the feature space embedding of training samples (not necessarily test samples) is \textit{independent} of our weights. Interestingly, this implies that over small, say $\epsilon>0$ sized, regions of weights around $w$, say $\Ballw$, we may parameterize our training outputs unambiguously by the product of weights over paths, $\bar{w}\in \Lambda(\Ballw)$, instead of the "usual" parameterization $w$. Note though that globally the relationship is not $1-1$. 

Though the practical size of these regions may be very small, this local linearization in terms of $\wpath$ is analytically well suited for characterizing networks trained by local methods, e.g. gradient descent. Indeed, the gradient operator at every $w$ treats $\sigbar$ as a constant function of $w$, independently of the size of the containing region. If ultimately, the final training weights satisfy some condition in terms of the gradient operator then we should study this condition by considering local perturbations of a set of linear models parameterized by $\wpath$. We assume cross-entropy loss, in which case the local loss landscape of models in a small neighborhood of $\wpath$ is exactly that of the loss landscape of logistic regression models on $\PathSpace$ with the same training data and feature map $\phi(\cdot,w)$.

\subsection{Assumptions Made}
\label{sec:assumptions}
Prior to detailing the assumptions made in this paper, we first highlight a compelling recent work on \textit{unregularized} logistic regression for linearly separable problems in \cite{Soudry2017a}. Here, the authors prove that gradient descent yields a sequence of classifiers whose normalized versions converge to the max margin solution. For example, the authors provide a theoretical basis for the increase in test accuracy and test loss during training even after the training accuracy is 100\%. Note that this peculiar behavior is also common to neural networks \cite{Shwartz-Ziv2017}. Inspired by this connection, we assume the following:

\begin{assumption}{Zero Training Error}\label{asm:zte}\\
The weights $w$ obtained from training on $\trainS$ ensure $\prednet$ correctly classifies every sample in $\trainS$. Equivalently:
\[\forall(x,y)\in\trainS\quad y\langle\Lambda(w),x\rangle\geq0\]
\end{assumption}

Note that, for zero training error, linear separability\footnote{In fact we are guaranteed a separating hyperplane containing the origin} of our embedded data, $\trainembed$, is strictly necessary. Motivated by analogy with maximum margin classifiers in logistic regression, we make the following second assumption on the network weights obtained by training on $\trainS$:

\def\trainembed{(\phi(\samplex{j},w),\sampley{j})_{j=1}^m}
\begin{assumption}{Max-Margin}\label{asm:mm_relax}\\
The training procedure returns weights $w$ such that up to positive scaling, $\Lambda(w)$ is the maximum margin classifier for the $w$-parameterized embedding $\{(\phi(\samplex{j},w),\sampley{j}):j\in[m]\}$. Equivalently, $w$ must satisfy the relation
\[\Lambda(w) \in \argmax_{\bar{v}\in\PathSpace} \min_{(x,y)\in\trainS}{\frac{y\langle \bar{v},\phi(x,w)\rangle}{\norm{\bar{v}}}} \]
\end{assumption}

\subsection{Merit of Assumptions}\label{sec:rationale}
Of the two assumptions made in the paper, note that Assumption \ref{asm:zte}, of zero training error, is not uncommon for neural networks in practice \cite{Soudry2016}.   Therefore, we do not discuss Assumption \ref{asm:zte} in greater detail in this subsection, focusing more on the second assumption in this paper.

The value of Assumption \ref{asm:mm_relax} is more nuanced, and we devote an entire section to discussing this, expanding with relevant experiments in Appendix \ref{app:Aone}. In short, we show empirically that networks trained with gradient descent satisfying Assumption \ref{asm:zte} are not too different from those satisfying \Aone{}. This is not unexpected given the comparison with unregularized logistic regression (in the feature space). The merit of an assumption lies not in whether it is strictly true but in whether it is \textit{interesting}. This assumption concisely explains certain experimental phenomena and allows theoretical tractability. Experimentally, it also seems \textit{relevant} to practical DNNs. Unregularized logistic regression only finds the max-margin classifier in the training limit as the number of iterations approaches infinity. Our experimental observations of DNNs trained with finitely many iterations are consistent with those that \textit{approximately} satisfy the max-margin assumption.

Note that, without idealized assumptions such as Assumption \ref{asm:mm_relax}, it is very difficult to build a framework that helps us gain an understanding of the problem, or the implications of its solution. In particular, Assumption \ref{asm:mm_relax} forms a starting point for a deeper theoretical understanding of neural networks, one that provides useful insights that can be employed towards a more general, overall theory for deep neural networks.

\subsection{Network Support Vectors}

In this section we use \Aone~ to extend the definition of support vectors to neural networks with zero training error. 
By the Representer Theorem
\cite{Scholkopf2001}, 
the max-margin condition on $\wpath$ in \Aone~ implies that for some nonnegative scalars $\alpha_1,\ldots,\alpha_m$,
\begin{equation}\label{eqn:representer}
\wpath=\sum_{k=1}^m\alpha_k\sampley{k}\phi(\samplex{k},w).
\end{equation}

\def\nsv{{NSV}}
\def\nsvs{{NSVs}}
\def\svs{{SVs}}
\def\samplesup{(\samplex{1},\sampley{1}),\ldots,(\samplex{s},\sampley{s})}

Analogously to classical SVMs, for a fixed set of weights $w$ achieving \Aone, we define the subset $\trainSsup\triangleq\{(\samplex{k},\sampley{k}):\alpha_k\not=0\}$ of those training data points that correspond to nonzero $\alpha_k$ to be ``network support vectors"(\nsvs) or simply ``support vectors" when context is clear. We also use $\trainSrest=\trainS-\trainSsup$ to denote the $m-s$ data which are not support vectors. 

To gain an experimental understanding of these ``support vectors", we  train neural networks on a 2-class MNIST variant formed by grouping labels $0-4$ and $5-9$. We show that many qualitative properties of SVMs continue to hold true in this case when the embedding map is learned. We first determine network weights obtained from minimizing the neural network loss. Then, we define an embedding map $\phi(\cdot,w)$ using those weights. Finally, we train a SVM using the kernel as defined by $\langle \phi(\samplex{i},w),\phi(\samplex{j},w)\rangle$.
The details of this experiment and all others in this section are presented in the Appendix \ref{app:experiment}.

As noted in these experiments, we determine that the behavior of the number of NSVs  is qualitatively similar to what we might find in a conventional SVM setting. For example, we typically find $s/m\approx 0.15$. We find that every time we increase the number of training samples, $m$, and retrain the network from scratch, the net effect is that $s$ increases but $s/m$ asymptotically decreases to $0.1$ (Figure \ref{fig:sv_vs_m}). This is entirely expected in the simplified setting with a fixed embedding map: additional samples can only decrease the margin, reducing the fraction of volume within the margin of the hyperplane. Thus, additional randomly selected samples are increasingly unlikely to be support vectors. 

Given that the SVM model (with fixed embedding) is determined entirely by $\trainSsup$, the model is said to have ``memorized'' the sample $(x,y)\in\trainS$ iff $(x,y)$ is a support vector. We find that a similar notion holds for network support vectors. In deep learning, the notion of memorizing a given individual sample is less clear, but we often describe a DNN with wildly divergent test and train accuracies as having ``memorized the dataset". For example, DNNs will often achieve zero training error even when there is no relationship between inputs $\mathcal{X}$ and outputs $\mathcal{Y}$.

If we randomize each label of $\trainS$ prior to training so that the training data is  sampled from a product of marginal distributions instead, $\trainS\sim\pdata_\mathcal{X}\times\pdata_\mathcal{Y}$, 
we observe experimentally that $s/m\approx 0.6$ (Figure \ref{fig:noisy_label_sv_vs_m}). This can be understood as follows: although the labels are independent of the inputs, there are natural clusters in the input that the model can use to fit these random labels in the training data. Each sample has a label consistent with at least half of the training set, since half of the training data have the correct label. Thus, the DNN is learning a pattern corresponding to the true labeling (or its reverse) and {\em building in exceptions} for the rest of the data by adding them as support vectors. Note that learning this labeling on MNIST requires $~0.1m$ support vectors (from before). The addition of $0.5m$ training samples that violate the first learned labeling results in the observed $0.6m$ total.

In a conventional SVM setting, models with fewer support vectors are thought of as more parsimonious. Furthermore, the fraction of training samples that are support vectors can be concretely linked to generalization bounds through sample compression techniques, as in \cite{Littlestone1986}. An important observation is that the SVM solution can be reconstructed from the subset of support vectors $\trainSsup\subset\trainS$, so bounding $s=|\trainSsup|$ controls the number of training samples the model can memorize. Similarly, in Section \ref{sec:pbsc}, we construct analogous bounds for deep neural networks that depend centrally on this number of NSVs. 

We now turn to understanding how this number of NSVs varies with architecture parameters. We first study fully-connected networks on flattened MNIST images. 
There, we find that the fraction $s/m$ increases logarithmically as we increase the width $\width$ (Figure \ref{fig:sv_vs_width}) but \textit{decreases} linearly as we increase the depth $d$ (Figure \ref{fig:sv_vs_depth}). It is interesting to note that $s$ decreases with depth $d$ in these cases. Taking on faith for the moment that the next section (specifically, Theorem \ref{thm:pbsc}) will fashion a bound on the test error of the form $\mathcal{O}(ns/m)$, (as initially advertised in the abstract), we can understand the significance of this observation. Although $n$ increases linearly with $d$, we observe a net decrease in the generalization bound with depth, since the decrease in $s$ with depth is "superlinear" in the sense that doubling the number of layers from $3$ to $6$ more than halves $s$. As a result, the closely related generalization bound we will justify in the subsequent section seems to decrease with depth as well (Figure \ref{fig:thm1_vs_depth}). While these experimental relationships are interesting, these are preliminary in nature, and additional study is required to make concrete claims on the relationships between parameters of the network.

Of particular interest is the inverse relationship between the number of NSVs and depth. In order to understand whether this relationship continues to hold in more general settings, we study binary classification of Frogs vs Ships on the CIFAR-10 dataset using convolutional networks with nonzero biases. These networks consist of initial convolutional and max pooling layers followed by a variable number of FC depth many fully-connected layers. We see that the relationship is more noisy, but still there is a clear trend that $s$ decreases significantly for larger depths (Figure \ref{fig:cifar_sv_vs_depth}). While we have extended the notion of network support vectors to convolution and nonzero bias networks (Appendix \ref{app:bias_conv}), our generalization theory developed in the next section only supports fully-connected networks for now. Therefore we don't calculate a bound such as in Figure \ref{fig:thm1_vs_depth} for this data.

\begin{figure}[h]
\centering
\includegraphics[width=\linewidth]{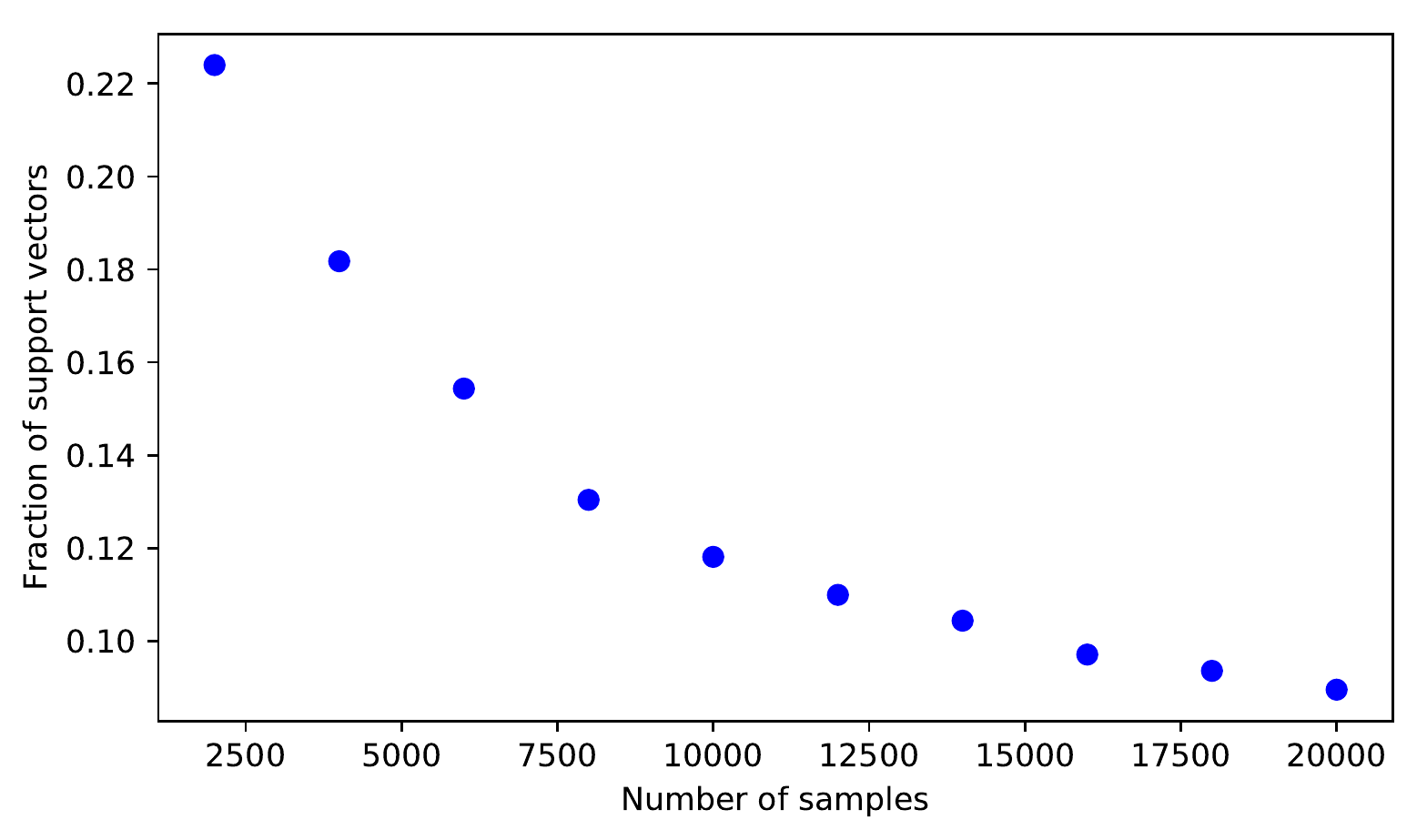}
\caption{Support vector fraction of data $s/m$ vs Number of samples m: Increasing the size of the training set decreases asymptotically the fraction $s/m$ of support vectors.}
\label{fig:sv_vs_m}
\end{figure}

\begin{figure}[h!]
\centering
\includegraphics[width=\linewidth]{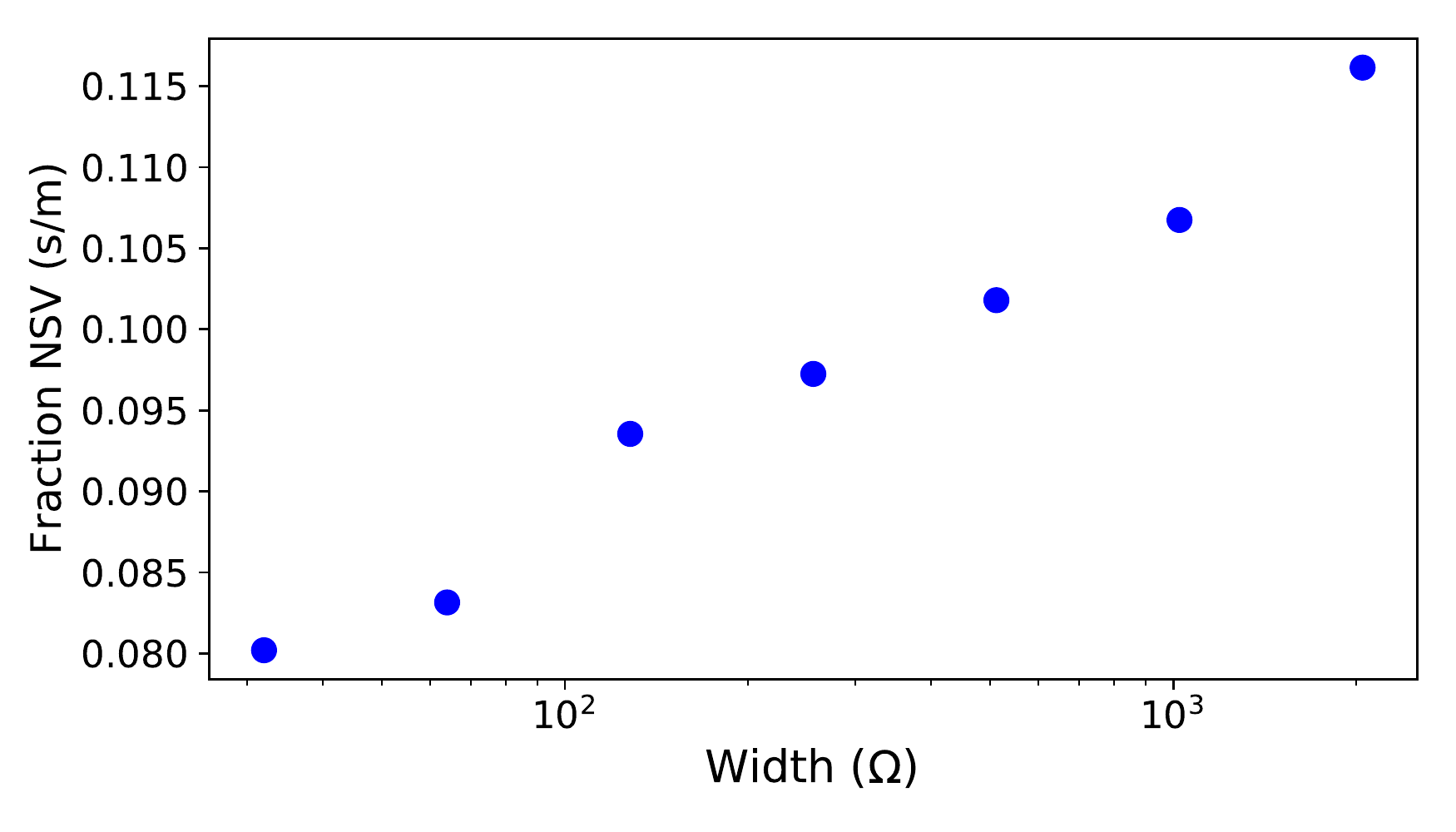}
\caption{Fraction Network Support Vectors $(s/m)$ vs width $\width$: ReLU networks of varying width $\width$ are trained to classify MNIST images. Each width-dependent trained set of network weights, $w$, is used to define an embedding $\phi(\cdot,w)$. The number of support vectors, $s$, corresponding to the maximum margin classification of $\trainembed$ is measured ($m$ is constant). Each point represents an average of three runs. The results indicate that $s$ grows proportionally to $\log(\Omega)$.}
\label{fig:sv_vs_width}
\end{figure}

\begin{figure}
\centering
\includegraphics[width=\linewidth]{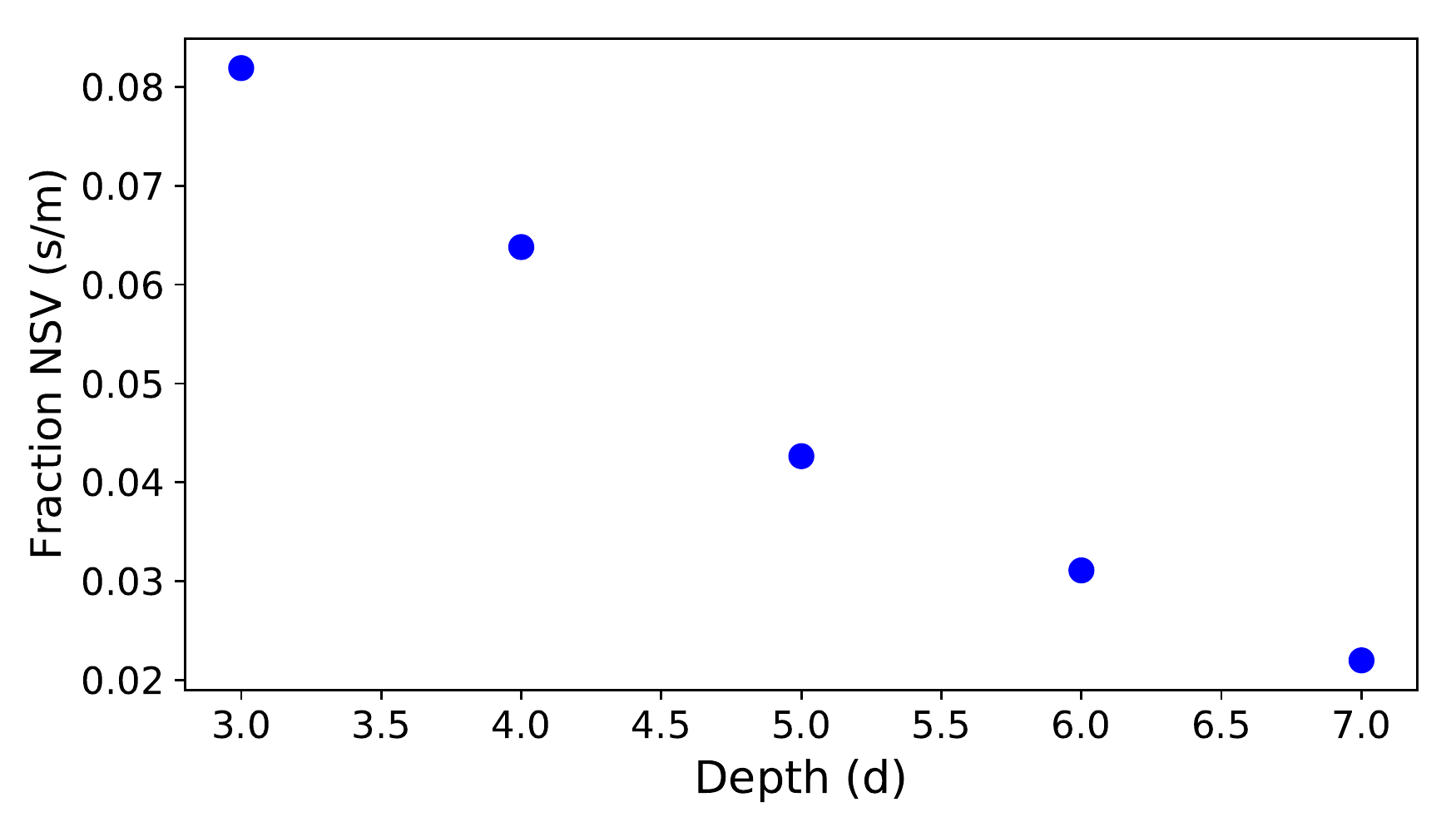}
\caption{Fraction Network Support Vectors $(s/m)$ vs depth $d$: The depth, i.e., the number of hidden layers, is varied, resulting in a depth-dependent embedding of the training data, $\trainembed$, where $w$ is the set of weights obtained from training a DNN with $d$ layers to classify data in $\trainS$. The number of support vectors $s$ decreases with depth over these finite ranges. Each point represents an average of three runs.}
\label{fig:sv_vs_depth}
\end{figure}

\begin{figure}
	\centering
	\includegraphics[width=\linewidth]{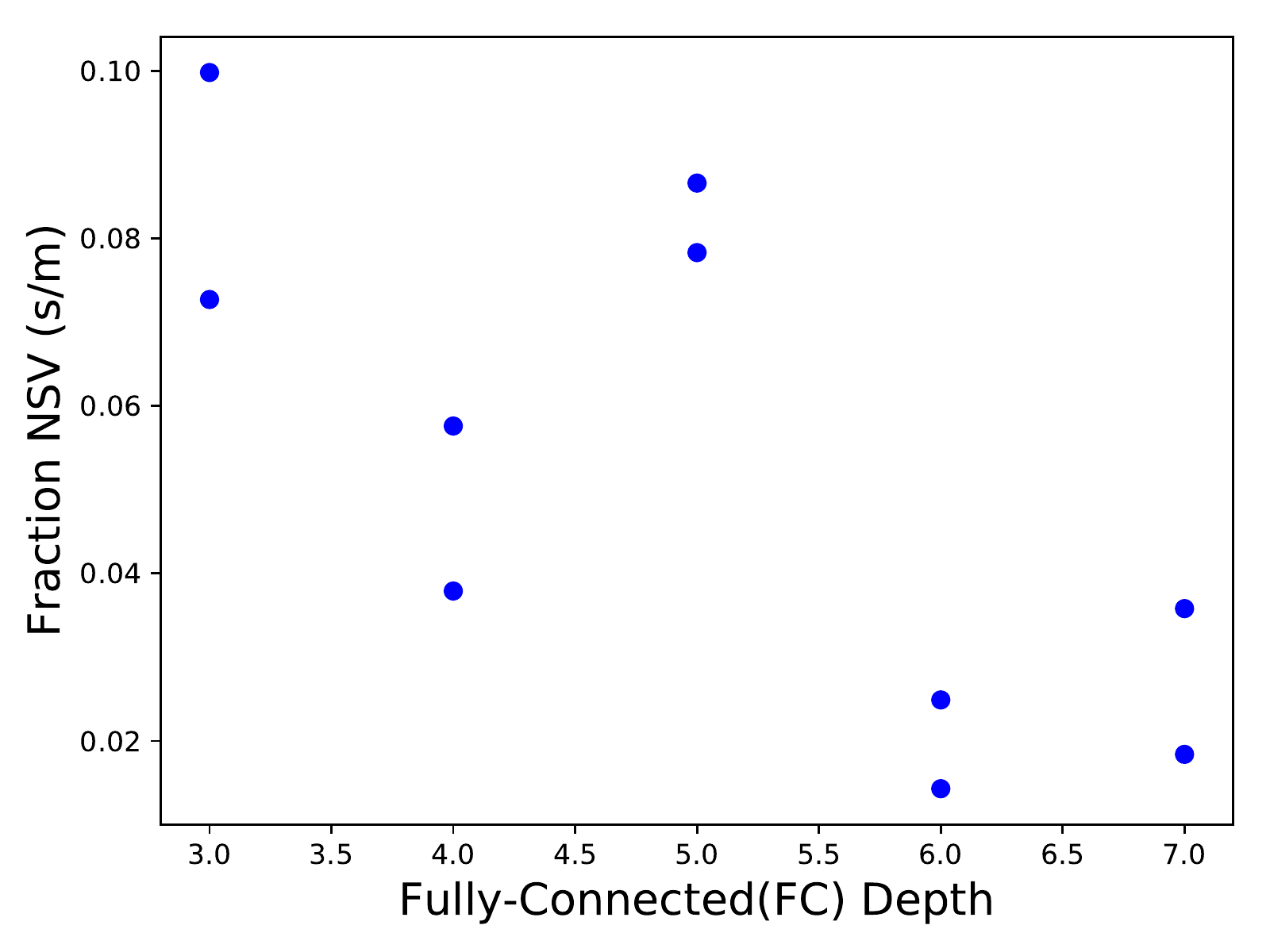}
	\caption{Fraction Network Support Vectors $(s/m)$ vs fully-connected (FC) depth on the CIFAR dataset. The first three layers learned are convolutional and are not counted toward the depth. We see that the max-margin classification of $\trainembed$ results in many fewer NSVs when then depth is made larger.
	}
	\label{fig:cifar_sv_vs_depth}
\end{figure}

\section{SAMPLE COMPRESSION BOUNDS}\label{sec:pbsc}
\def\Obound{\mathcal{O}(sn)}
\newcommand{\Ln}[1]{\ln\left(#1\right)}

In this section, we present a concrete theoretical relationship between the number of network support vectors and a bound of the test error of deep neural networks satisfying assumptions as outlined in Section \ref{sec:assumptions}. The setting for all theorems will be Leaky-ReLU (incl. ReLU) networks with arbitrary, fixed fully-connected architecture. 
Just as a SVM max-margin classifier is determined entirely by its cast of support vectors, \textit{only finitely many} neural networks satisfying the max-margin assumption (\Aone) correspond to a given set of at most $s$ network support vectors. 
This is presented as the following theorem (proof given in Appendix \ref{app:pbsc}):

\def\Bound{\mathcal{F}}
\def\BoundArgs{\Bound(m,n,s,\delta)}
\def\BoundDef{\frac{n+ns+s+s\Ln{\frac{m}{s}} + \Ln{\frac{1}{\delta}} }{m-s}}
\begin{restatable}{theorem}{PBSC}\label{thm:pbsc}
Let $\net$ refer to a Leaky-ReLU neural network with $d$ hidden layers each consisting of width $\width$ neurons so that we have $n=d\width$ neurons total. Let the weights $w$ be  deterministic functions of $\trainS$, which is a set of $m$ i.i.d. data samples from $\pdata$. Let $s<m$ be a fixed integer which does not depend on $\trainS$. 
Supposing that:
\begin{enumerate}
\item{\Azte~(Zero training error): $\prednet(x)=y$ $\forall(x,y)\in\trainS$,}
\item{\Aone~(Max-margin): $\Lambda(w)$ is some positively scaled version of the max-margin classifier for
$\{(\phi(x,w),y):(x,y)\in\trainS\}$, and}
\item{(At most $s$ support vectors): $\Lambda(w)=\sum_{k=1}^m\alpha_k\sampley{k}\phi(\samplex{k},w)$ for some set of coefficients $\alpha_k$, at most $s$ of which are nonzero.}
\end{enumerate}
 then we have, $\forall\delta\in(0,1]$

\begin{align}
\Prob{\Risk \leq  \Bound(m,n,s,\delta)}\geq 1-\delta\nonumber
\end{align}
where
\begin{align}
\Bound(m,n,s,\delta)&=\BoundDef\label{eqn:bound} \\
&\approx \frac{ns+\Ln{\frac{1}{\delta}}}{m}\nonumber
\end{align}
\end{restatable}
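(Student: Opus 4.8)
The plan is to exhibit $\prednet$ as the output of a sample-compression reconstruction scheme with side information and then to invoke the PAC-Bayes sample-compression bound of Laviolette \cite{Laviolette2007} in its realizable (``consistent'') form, which is available because \Azte{} forces zero empirical error on \emph{every} training point, in particular on those outside the compression set. I would take the compression set to be the network support vectors $\trainSsup$, which by the support-vector hypothesis number at most $s$, leaving at least $m-s$ held-out points; take a data-independent message set of binary strings with the uniform prior $P$; and define a reconstruction map $\mathcal{R}$ (below) from a size-$\le s$ subset and a message to a classifier. Laviolette's bound then gives, with probability at least $1-\delta$,
\[
\Risk \;\le\; \frac{1}{m-s}\Bigl(\Ln{\tbinom{m}{s}} + \Ln{1/P(\mathfrak m)} + \Ln{1/\delta}\Bigr),
\]
and taking $\mathfrak m$ to be a string of length $ns+n$ gives $\Ln{1/P(\mathfrak m)}=(ns+n)\Ln{2}\le ns+n$, while $\Ln{\tbinom{m}{s}}\le s\Ln{em/s}=s+s\Ln{m/s}$, so the right-hand side is at most $\BoundArgs$ as in \eqref{eqn:bound}.

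The content is in $\mathcal{R}$, which I expect to be the main obstacle. The message $\mathfrak m\in\{0,1\}^{ns+n}$ splits into two blocks: the first $ns$ bits record, for each support vector $\samplex{k}$ and each of the $n$ hidden neurons, which of the slopes $\{\beta,\gamma\}$ that neuron takes under $w$ (equivalently, the sign of its pre-activation); the last $n$ bits record one ``orientation'' sign per hidden neuron. Reconstruction proceeds in three steps. First, from the first block and the inputs $\samplex{k}$ recover the embeddings $\phi(\samplex{k},w)$ through \eqref{eqn:phi}, each coordinate being a product of recorded slopes times a coordinate of $\samplex{k}$. Second, solve the hard-margin SVM through the origin for $\{(\phi(\samplex{k},w),\sampley{k}):k\in\trainSsup\}$; since a hard-margin solution is unchanged by deleting non-support vectors, \Aone{} together with the support-vector hypothesis identifies this unique solution with $\Lambda(w)$ up to a positive scalar. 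Third, invert $\Lambda(\cdot)$: ratios such as $\Lambda(w)_{\pathindex}/\Lambda(w)_{i_d,\ldots,i_1,i_0'}$ recover each $\W{l}$ up to one scalar per neuron, the fibre of $\Lambda$ being a single orbit of the per-neuron rescaling group whose positive part preserves the network function (positive homogeneity of $\rho$) and whose sign part is a $2^n$-fold ambiguity that the second block resolves. This produces weights $w'$ with $\Lambda(w')$ a positive multiple of $\Lambda(w)$ and the same activation pattern as $w$ on \emph{every} input, hence $\phi(\cdot,w')=\phi(\cdot,w)$ and $\net(x,w')$ a positive multiple of $\net(x,w)$ for all $x$, so $\mathcal{R}$ outputs $\net^{\sign}_{w'}=\prednet$. (This is exactly the ``only finitely many networks per support-vector set'' phenomenon stated in the introduction: at most $2^n$ classifiers are consistent with a compression set and first block, and the second block names the right one.)

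The steps I would need to verify most carefully are all inside the reconstruction: that no path-product vanishes, so the ratio argument is valid and the fibre of $\Lambda$ is a single rescaling orbit --- this is the ``no identically-zero pre-activation'' condition of Section~\ref{sec:reparameterization}, together with whatever non-degeneracy condition is used there; that the hard-margin solution is unique and recoverable from its support vectors alone, which holds because \Azte{} makes the embedded data linearly separable through the origin; and the bookkeeping needed to make $\mathcal{R}$ a total function, defined on every subset--message pair, including those that never arise. What remains is routine: the union bound over the $\le\binom{m}{s}$ admissible compression sets (absorbed in the $\Ln{\tbinom{m}{s}}$ term), inverting the consistency/binomial inequality, and the arithmetic that collapses the numerator to $n+ns+s+s\Ln{m/s}+\Ln{1/\delta}$.
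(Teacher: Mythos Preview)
Your proposal is correct and takes essentially the same approach as the paper: compress to the $s$ support vectors, encode the $ns$ activation bits to recover $\phi(\samplex{k},w)$ and hence $\Lambda(w)$ via the hard-margin SVM, then spend $n$ bits to resolve the $2^n$-fold ambiguity in $\Lambda^{-1}$, and plug into Laviolette's realizable sample-compression bound. The only presentational differences are that the paper isolates the $2^n$ fibre bound as a separate result (Theorem~\ref{thm:skeleton}, proved via the ``skeleton'' normalization rather than raw ratios, which handles vanishing path-products cleanly by pruning dead neurons), and that the paper phrases everything in the prior/posterior PAC-Bayes language with a data-dependent message set rather than your fixed $\{0,1\}^{ns+n}$---but since the posterior is a point mass and the paper upper-bounds its message set by $2^{ns}\cdot 2^n$ anyway, the arithmetic is identical.
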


\begin{figure}[h!]
\centering
\includegraphics[width=\linewidth]{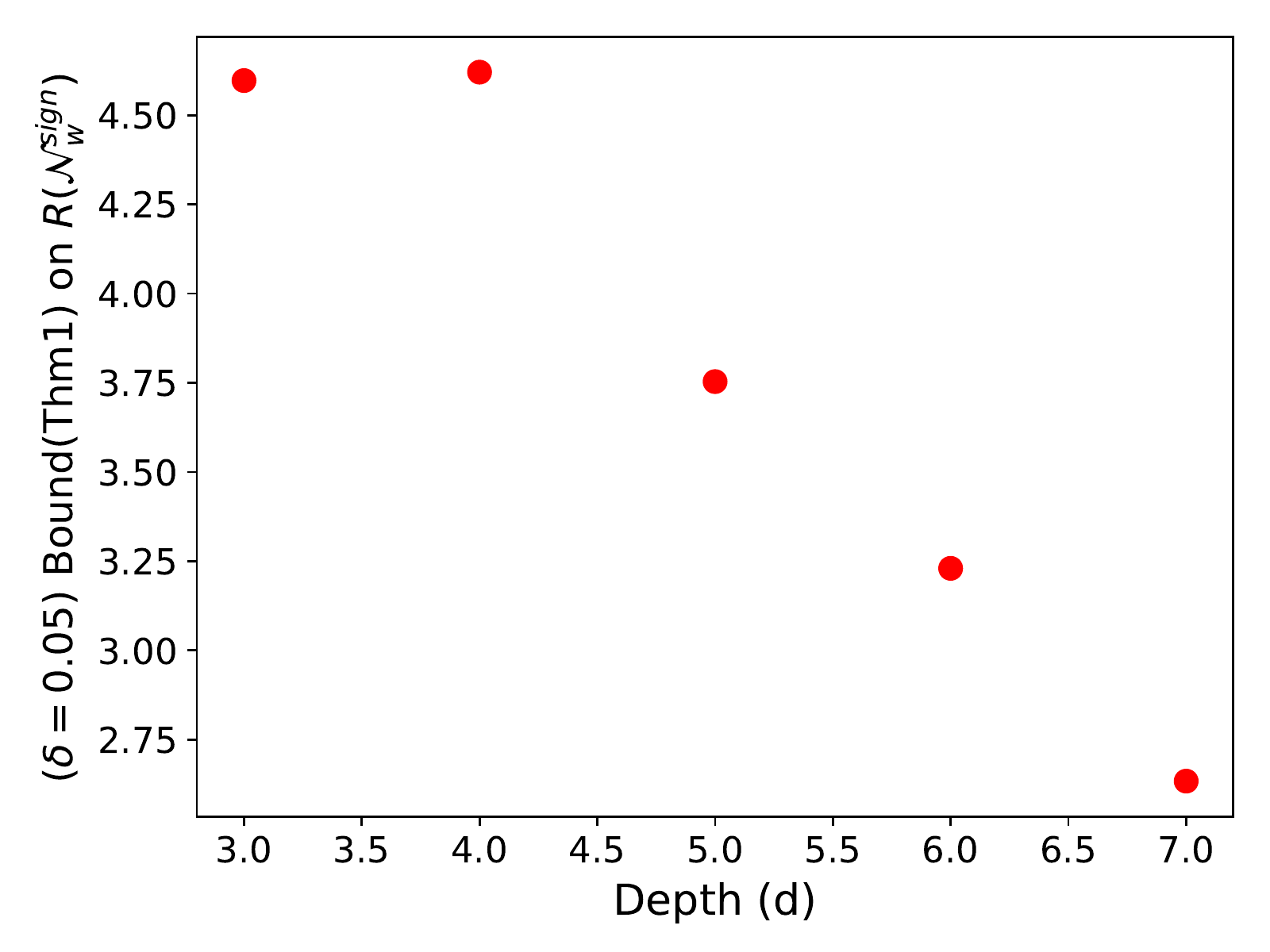}
\caption{Numerical Value of the Risk Upper Bound, $\BoundArgs$, in Theorem \ref{thm:pbsc} as Depth $d$ Varied: The outcomes and values of the  experiments in the previous section (see Appendix \ref{app:experiment} for details) were used for the generalization bound in Theorem \ref{thm:pbsc} \textit{as if} the assumptions apply.} 
\label{fig:thm1_vs_depth}
\end{figure}

It is interesting take note of what the bound does \textit{not} explicitly depend on. Very often, learning bounds for linear classifiers will depend on the norm of linear classifier. But one of the most striking properties of deep networks is their ability to generalize even without penalization of weight norms. In an effort to comment on this, our bound by design does not depend on the norm of $\wpath$. Also, the architecture dimensions and input dimension, $f$ are not explicitly mentioned in the bound. This is because the effect of changing the input dimension (or architecture dimension) is captured by changes in the feature map definition, $\phi(x,w)$. In particular, the dimension of the embedding space $\PathSpace$ may change. But, just as the feature space dimension is often absent from SVM theory, the input dimension $f$ makes no appearance in our results. In fact, if the feature map $\phi(\cdot,w)$ were known \textit{a priori}, (reducing to learning to an SVM problem) then also $n$ would not appear in our bound.
 
Sample compression bounds, as in Theorem \ref{thm:pbsc}, are based on the premise that each  learned classifier is specified by some small enough subset of the training data. For example, a SVM model can always be identified by its set of $s$ support vectors. On the contrary, if $K>s$ training samples are ``memorized'' during learning, then the SVM model cannot be specified by $s<K$ samples. Suppose, \textit{a priori}, that the SVM model has at most $s$ support vectors, then there are some $m-s$ training samples on which the learned model has minimal dependence. Thus, the risk on those $m-s$ samples should approximate the true risk. This intuitively explains why specifying a DNN by means of a subset of the training data is  related to generalization.

A more general approach allows subsets of training samples to specify a sufficiently small set of $N$ models containing the learned model. The bound produced by this generalization is related to the previous $N=1$ bound by an additive factor of $\ln(N)/m$. 
Note that, for any fixed $T\in(\mathcal{X}\times\mathcal{Y})^s$, at most $2^{s+ns+n}$ different DNN classifiers, $\{x\mapsto\prednet(x):w\in\WeightSpace\}$, can simultaneously have weights $w$ that satisfy the maximum margin \Aone~ for some set of network support vectors contained in $T$.

Conceptually, there are two steps to our argument:
\begin{enumerate}
	\item{Theorem \ref{thm:skeleton} will show that for each $\wpath\in\PathSpace$, there are only $2^n$ many classifiers $\prednet$ with $\wpath=\Lambda(w)$. 
	\footnote{ Though quantitatively $2^n$ is \textit{also} the number of neuron "on"/"off" configurations, this similarity seems largely coincidental as we arrive at $2^n$ by counting allowable weight sign configurations. }
	This is true even without \Aone{}. The DNN is entirely specified by $\wpath$ modulo at most $n$ bits needed to determine weight signs.
	}
	\item{When also we have \Aone{}, we can avoid specifying $\wpath$ directly by instead supplying the image of $s$ support vectors under the feature map $\phi(\cdot,w)$. We can do this by supplying $s$ samples and at most $ns$ bits determining their image under $\sigbar(\cdot,w)$.}
\end{enumerate}

\def\sgn{sgn}
\def\hypnet{\net^{\sign}}
\begin{restatable}{theorem}{Skeleton}
\label{thm:skeleton}
For $P\subset\PathSpace$, define $\hypnet(\cdot,\Lambda^{-1}(P))\triangleq\{\hypnet(\cdot,w):w\in\WeightSpace, \Lambda(w)\in P\}$. For $\wpath\in\PathSpace$, define $\bbR^+\wpath\triangleq\{\alpha\wpath:\alpha>0\}$.

Then
\begin{equation}
|\hypnet(\cdot,\Lambda^{-1}(\bbR^+\wpath))| \leq 2^n \label{eqn:skelbound}
\end{equation}
where $n=d\width$ is the number of neurons in the Leaky-ReLU network.
\end{restatable}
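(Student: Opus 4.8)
The plan is to show that the classifier $\hypnet(\cdot,w)$ depends on $w$ only through $\Lambda(w)$ up to positive scaling, together with at most one sign bit per neuron, so that fixing $\Lambda(w)\in\bbR^{+}\wpath$ leaves at most $2^{n}$ classifiers. The mechanism is the \emph{neuron-rescaling symmetry}: for a hidden neuron $\nu$ and scalar $c\neq0$, let $\tau_{\nu,c}$ be the map on $\WeightSpace$ that multiplies every weight entering $\nu$ by $c$ and every weight leaving $\nu$ by $1/c$. These maps commute and, for any $c\neq0$, preserve $\Lambda$ (a path touches $\nu$ along one incoming and one outgoing edge, or not at all, so the factors cancel); for $c>0$ they additionally preserve $\net(\cdot,w)$, since $\rho$ is positively homogeneous ($\rho(ct)=c\,\rho(t)$ for $c>0$), so the $c$ appearing on the post-activation of $\nu$ is cancelled by the $1/c$ on its outgoing weights. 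Let $G=(\bbR^{\times})^{n}$ be the group generated by all $\tau_{\nu,c}$ and $G^{+}=(\bbR^{+})^{n}$ its positive part.

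Granting a structural lemma, the count is immediate. Suppose $w,w'$ both have $\Lambda(w),\Lambda(w')\in\bbR^{+}\wpath$; the lemma gives $w'=g w$ for some $g\in G$. Factor $g=g^{+}h$ with $g^{+}\in G^{+}$ and $h\in\{\pm1\}^{n}$ (set $h_{\nu}=\sign(g_{\nu})$). Then $\net(\cdot,w')=\net(\cdot,g^{+}(hw))=\net(\cdot,hw)$, which depends on $w$ and on $h$ only, and $h$ takes at most $2^{n}$ values; hence there are at most $2^{n}$ functions $\net(\cdot,w')$, and a fortiori at most $2^{n}$ classifiers $\hypnet(\cdot,w')$. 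The structural lemma I need is: \emph{if $\Lambda(w)=\Lambda(w')$ and every weight of $w$ and of $w'$ is nonzero, then $w'=g w$ for some $g\in G$.} I would prove it by layer-by-layer peeling: holding all path coordinates fixed but the input feature $i_{0}$ shows $\big(\Lambda(w)_{(\ldots,i_{1},i_{0})}\big)_{i_{0}}$ is a fixed nonzero scalar times the incoming weight row of $i_{1}$; equating with $w'$ forces that row of $w'$ to be a nonzero multiple $\kappa_{i_{1}}$ of that of $w$, so $\prod_{i_{1}}\tau_{i_{1},\kappa_{i_{1}}}$ applied to $w$ matches $w'$ on layer $0$; absorbing it reduces the claim to a network with one fewer layer, and I recurse, bottoming out at the linear case $d=0$, where $\Lambda$ is injective.

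The principal obstacle is that the lemma's nonzero-weight hypothesis really fails for a general $w$ in the fiber: with zeros present, a neuron's incoming row need not be recoverable from $\Lambda(w)$, and $\Lambda^{-1}(\bbR^{+}\wpath)$ need not be one $G$-orbit. The remedy is to pass to the sub-network $S$ spanned by the paths on which $\wpath$ is nonzero (equivalently, the neurons and edges lying on some nonzero-product path); this depends only on the support of $\wpath$, hence is common to all $w$ in the fiber, and every edge of $S$ carries a nonzero weight, so the lemma applies inside $S$ with $n_{S}\le n$ hidden neurons. It then remains to verify that $\net(\cdot,w)$, and hence its activation pattern, depends only on $w|_{S}$. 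The path expansion $\net(x,w)=\sum_{p:\wpath_{p}\neq0}\sigbar(x,w)_{p}\,x_{i_{0}}\wpath_{p}$ already involves only relevant paths, so it suffices to show that the activation state of a relevant neuron is insensitive to weights outside $S$; this is a short downward induction on depth using the dichotomy ``relevant-from-below'' versus ``relevant-from-above'' (a neuron with both is relevant, a contradiction): an irrelevant neuron feeding a relevant one through a nonzero weight must have identically-zero output — ultimately because some incoming row below it vanishes entirely — so it contributes nothing to the pre-activation. Assembling these gives $\big|\hypnet(\cdot,\Lambda^{-1}(\bbR^{+}\wpath))\big|\le 2^{n_{S}}\le 2^{n}$; the degenerate case $\wpath=0$, where $\net(\cdot,w)\equiv0$ and there is a single classifier, is handled separately.
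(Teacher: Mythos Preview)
Your approach is essentially the paper's: both arguments exploit the neuron-rescaling symmetry to show that, once the path vector $\Lambda(w)$ is fixed, only a $\{\pm1\}^{n}$ worth of freedom remains. The paper phrases this constructively by normalizing each neuron to have one incoming weight in $\{\pm1\}$ (the ``skeleton'') and then solving explicitly for every other weight via ratios of path products; you phrase it group-theoretically by identifying the fiber as a single $G$-orbit and factoring $G=G^{+}\times\{\pm1\}^{n}$. These are the same mechanism in different language, and your handling of irrelevant neurons (restricting to the sub-network $S$ and checking that irrelevant neurons feeding relevant ones through nonzero edges have identically zero output) matches the paper's ``drop any neuron not on a nonzero path'' reduction.

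There is one genuine slip. Your structural lemma concludes $w'=gw$ only when $\Lambda(w)=\Lambda(w')$ \emph{exactly}, because every $g\in G$ preserves $\Lambda$; yet you invoke it for $w,w'$ whose path vectors lie merely in the same ray $\bbR^{+}\wpath$, which is strictly weaker. The fix is a one-line reduction that you omitted: first replace $A'^{(d)}$ by $(\alpha/\alpha')A'^{(d)}$ where $\Lambda(w)=\alpha\wpath$ and $\Lambda(w')=\alpha'\wpath$, so that $\Lambda$ now agrees exactly; this positive scaling of the output layer leaves $\hypnet(\cdot,w')$ unchanged. Only then does the lemma apply. With that adjustment the argument is complete.
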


Though we primarily use Theorem \ref{thm:skeleton} as a tool to prove Theorem \ref{thm:pbsc}, it has its own interesting interpretation as a characterization of the expressivity gap between SVMs and NNs, which we leave for Appendix \ref{app:skeleton_discussion}.

There are two main ideas underlying Theorem \ref{thm:skeleton} 
(Proof in Appendix \ref{app:nn_recovery}). 
Note that $\Lambda(w)$ only describes products of weights, which creates ambiguity in the scale of individual weight parameters. For example, replacing entries of $w$, $(\WeightMatrix{l+1},\W{l})$, with $(\alpha\W{l+1},\alpha^{-1}\W{l})$, does not change $\Lambda(w)$ for any choice of $\alpha>0$. This implies $|\Lambda^{(-1)}(\wpath)|=\infty$. However, the nonlinearity $\rho$ commutes with positive diagonal matrices, and class predictions are obtained as the sign of the network outputs, $\defprednet$.  Theorem \ref{thm:skeleton} implies that replacing $w$ with $\prednet$ eliminates scale information that causes ambiguity in $w$ given $\Lambda(w)$ alone. 
In other words, the set $\hypnet(\cdot,\Lambda^{-1}(\bbR^+\wpath))$ can potentially be finite as its elements cannot be indexed by a continuously-valued positive-scale parameter.

Given only $\Lambda(w)$, the second type of ambiguity in the weights $w$ is that of sign parity. Overall, $\wpath=\Lambda(w)$ forms a system of equations (one per path) involving products of the variables $\WeightMatrix{l}_{i,j}$ that cannot be solved without additional information. If  the sign of each network weight was known, we could determine the network weights by solving a system of linear equations $\Ln{|\wpath|}=\Ln{|\Lambda(w)|}$ in the variables 
$\ln( |\WeightMatrix{l}_{i,j}| )$. 
This provides a bound of $2^{P}\approx 2^{d\Omega^2}$ over the number of possibilities of $\sign(w)$, where $P$ is the number of parameters. However, this would translate to a bound governed by the ratio of the number of parameters to samples.
Such a bound is slightly unexciting in the context of deep learning, where often $P >> m$. Another idea contained in Theorem \ref{thm:skeleton} is that one can replace the number of parameters with the number of neurons. The knowledge of $\sign(\wpath)$ can be used to reduce the bound to $2^{n}=2^{d\Omega}$, where $n$ is the total number of neurons. In fact, it is an interesting intermediate result that given $\wpath$, $w$ is determined entirely by the sign of just $n$ weights in a particular geometric configuration (see Figure \ref{fig:skeleton}). (Interestingly, the \textit{sign of the weights}, which featured prominently in earlier experiments (Figure \ref{fig:wbarpos_weights}), reappears as relevant theoretical quantity). Consequently, we arrive at an improved bound governed by $n/m$.

The bound on the true risk, $\Risk$, depends on bounding the log of the number of classifiers consistent with any given training set. 
To summarize which steps in our bound over classifiers feature most prominently in our bound on $\Risk$, we tabulate the results from previous discussion in Table \ref{tab:pbsc}. As each step in our argument has an additive effect on the bound, we can speak of the "contribution of each step" to the bound on $\Risk$.

\begin{table}[h]
\caption{Additive Effect on Sample Complexity} \label{tab:pbsc}
\begin{center}
\begin{tabular}{lll}
\textbf{STEP}  &\textbf{\#WAYS} &(m-s)\textbf{$\Bound(m,n,s,\delta)$} \\
\hline \\
$\trainS\rightarrow$\nsvs   &$2^s\choosemany$   &$s\Ln{\frac{m}{s}}+O(s)$ \\
\nsvs$\rightarrow\Lambda(\WeightSpace)$ &$2^{ns}$ &$ns$\\
$\Lambda(\WeightSpace)\rightarrow\WeightSpace\rightarrow\mathcal{Y}^\mathcal{X}$ &$2^n$ &$n$\\
\end{tabular}
\end{center}
\end{table}

\subsection{On  Improvements and Further Research}
A significant reduction in the generalization bound of Theorem \ref{thm:pbsc} to well below $\mathcal{O}(ns/m)$ may be possible in practical settings. Specifically, the largest term in the numerator of the bound, $ns$, arises due to a bound over path activations on \nsvs~ that allows each sample, $x$, to choose its embedding $\sigbar_w(x)$ independently. Experimentation, however, suggests that this bound is pessimistic under practical circumstances, and training samples are instead embedded in a co-dependent manner.

To understand the dispersion of $\{\sigbar(x,w):(x,)\in\trainS\}$, we train a ReLU network with depth $d=3$ and width $\width=10$ for $50,000$ iterations on MNIST. As an output, we measure the number of unique patterns of path activation in the network, $|\{\sigbar(x,w):(x,)\in\trainS\}|$, over either training or test data as the number of training samples $m$ varied (Table \ref{tab:unique_sigbar}). For emphasis, we count $\sigbar(\samplex{i},w)\not=\sigbar(\samplex{j},w)$ as distinct patterns if even a single neuron, say $i_l$ in layer $l$, behaves differently on $\samplex{i}$ and $\samplex{j}$, i.e., $\sigl(x,w)_{i_l}\not=\sigl(x,w)_{i_l}$.

Based on previous experiments (see Figure \ref{fig:sv_vs_m}), a reasonable guess for the number of support vectors is $s=|\trainSsup|\approx0.1m$. If, in practice, for each $j\in[m]$, the embedding of the $j^\textnormal{th}$ \nsv, $\sigbar(\samplex{j},w)$, was unconstrained by that of the others, $\{\sigbar(x,w):x\in\trainSsup-\samplex{j}\}$, then with high likelihood we would expect to see around $0.1m$ unique path activations counted among support vectors. Although we do not measure this directly, we measure a relatively pessimistic upper bound instead by counting the number of unique path activations over \textit{the entire training set}. We observe that $|\{\sigbar(x,w):x\in\trainSsup\}|\leq|\{\sigbar(x,w):x\in\trainS\}|\approx 0.01m$ (Table \ref{tab:unique_sigbar}). The number of unique test embeddings of the $10k$ test samples are also relatively few (second row). This suggests that the embeddings, $x\mapsto\phi(x,w)=\sigbar(x,w)\otimes x$, over training and test data are actually  tightly coordinated, which may help further bound the number of possible embeddings of a given set of support vectors.

Future research: We recognize considerable further experimentation is needed, particularly one would like to know "under what circumstances does Assumption \ref{asm:mm_relax} hold?". We point out that to even suspect that this is an interesting question to ask requires the experimental and theoretical contributions of this paper--sometimes finding the right question is difficult in and of itself. These contributions are themselves starting points: The existence of a relationship between the number of support vectors and the architecture parameters is intriguing but warrants further exploration. And, the theoretical generalization bounds we present that depend on the number of support vectors are notable for being the only sample-compression based bounds for neural networks, but by no means do they represent the most sharpened bounds possible. Our future goal is to develop improved bounds by continuing this line of thought in the future.

\begin{figure}
\caption{Unique Sets of Active Paths Over Inputs}\label{tab:unique_sigbar}
\begin{center}
\begin{tabular}{c c c c c c}
\centering
$m=|\trainS|$ & 100 & 500 & 5000 & 20000 & 50000\\
\hline
$|\{\sigbar(x_{train})\}|$  & 49 & 75  & 210 & 282 & 711\\
$|\{\sigbar(x_{test})\}|$ & 75 & 153 & 240 & 265 & 468\\
\end{tabular}
\end{center}
\end{figure}

\section{CONCLUSION}

In this paper, we motivate and develop the study of Leaky-ReLU type deep neural networks as SVM models with embedding maps locally independent of the weights. Towards this end, we make an idealized assumption, that the neural network results in a ``max-margin" classifier. We provide an example of an experimental observation involving the configuration of the signs of the weights that is difficult to reconcile without the lens of this max-margin assumption.

Exploring the implications of this assumption, we demonstrate the experimental behavior and theoretical relevance of resulting ``network support vectors", and draw parallels between conventional support vectors and NSVs. Subsequently, we develop a generalization bound for deep neural networks that are depth-dependent in Theorem \ref{thm:pbsc}. The conceptual shift underlying the concrete ideas in the paper is to {\em parameterize} the neural network not by the weights, but as the solution to one of a small number of optimization problems.

\section{ACKNOWLEDGEMENTS}
This work was supported by the National Science Foundation and Office of Naval Research grant N000141912590 and the Army Research Office grant W911NF-19-1-0413.

\bibliography{library}
\bibliographystyle{IEEEtran}

\section{Appendix}
\subsection{Accommodation of Biases and Convolutional Layers}\label{app:bias_conv}
This section provides an interpretation of the path space and embedding map in the context of general fully-connected or convolutional Leaky-ReLU (and ReLU) networks. 
While there is a single canonical way to include biases, multiple methods may be possible for the incorporation of convolutional layers into the theory.

We turn to networks including biases. We now allow $w\in\WeightSpace$ to represent the choice of biases as well as multiplicative weights, $w=((\WeightMatrix{d},\bias{d+1}),\ldots,(\WeightMatrix{1},\bias{2}),(\WeightMatrix{0},\bias{1}))$, where each $\bias{l}\in\bbR^{\width}$ for $l\in[d]$ (and $\bias{d+1}\in\bbR$) is the bias added to the result of multiplying the activation of layer $l$ by $\WeightMatrix{l}$. Our choice of indexing is so that the $\bias{l}$ has the same dimension as the width of layer $l$. We have 

\begin{equation}
\net(x,w)\triangleq\bias{d+1}+\WeightMatrix{d}\rho(\ldots(\bias{2}+\WeightMatrix{1}\rho(\bias{1}+\WeightMatrix{0}x))\ldots).
\end{equation}

Previously, in Section \ref{sec:reparameterization}, it was established that $\net(x,w)$ could be decomposed into a sum of contributions over paths, $p=(\pathindex)$. Each path is determined by the choice of a single index per layer, including a "starting" index, $i_0\in[f]$, "connected by" a sequence of neurons, $i_l\in[\width]$ in layer $l=1,2,\ldots,d$ to the output. The contribution of this path is "seeded" with value $x_{i_0}$ and is scaled as one "moves" along the path from input to output. The scaling factor for each edge $(i_{l+1},i_{l})$ is $\WeightMatrix{l}_{(i_{l+1},i_{l})}$, which amounts to a factor of $\wpath_{\pathindex}$. The scaling factor of the $i_l^{\text{th}}$ neuron say in layer $l$ is determined by the slope of the nonlinearity of that neuron evaluated at its incoming activation during a forward pass, $\sigl(x,w)_{i_l}$. We grouped these together using the notation $\sigbar(x,w)_{(i_d,\ldots,i_1)}\triangleq \sig{d}(x,w)_{i_d} \cdots \sig{1}(x,w)_{i_1}$ .
	
With biases, the network output can still be decomposed into contributions across paths by additionally allowing paths to begin at any neuron within the network instead of only at input features:
	
\begin{align*}
\net(x,w)&=\sum_{p=(\pathindex)}\bias{d+1}+\WeightMatrix{d}_{i_d}\sig{d}(x,w)_{i_d}(\ldots\\ 
& \quad \sig{1}(x,w)_{i_1}(\bias{0}_{i_0}+\WeightMatrix{0}_{i_1,i_0}\sig{0}(x,w)_{i_0})\ldots)\\
&=\bias{d+1} + \sum_{p=(\pathindex)} \wpath_{p}\sigbar(x,w)_{i_d,\ldots,i_1} x_{i_0} \\
&\quad + \sum_{k=1}^d \sum_{p=(i_d,\ldots,i_k)} \wpath_{p} \sigbar(x,w)_{p}  \bias{k}_{i_k}
\end{align*}
The final term consists of a sum over contributions of paths--each can be interpreted as "seed value" of $\bias{k}_{i_k}$ which is then scaled by the remaining traversed edges and neurons connecting it to the output.
Note that in the above, we have augmented the definition of $\wpath$ and $\sigbar(x,w)$ by allowing additional coordinates corresponding to paths  $p=(i_d,\ldots,i_k)$ beginning at some intermediate layer ($k=1,\ldots,d$) in addition to those beginning at the input ($k=0$). We have:
\begin{align*}
\wpath_{i_d,\ldots,i_k}\triangleq &\WeightMatrix{d}_{i_d}\cdots \WeightMatrix{k}_{i_{k+1},i_k}\\
\sigbar(x,w)_{i_d,\ldots,i_k}\triangleq &\sig{d}(x,w)_{i_d} \cdots \sig{k}(x,w)_{i_k}.
\end{align*}
with the convention that $\sig{0}(x,w)_{i_0}=x_{i_0}$. To round out the notation, if we define a "dummy bias", $\bias{0}$, to be a vector of all ones, $\forall i_0$ $\bias{0}_{i_0}=1$, then we get a clean formulation for the network output:
\begin{align*}
\net(x,w)&=\bias{d+1}+\sum_{\substack{p=(i_d,\ldots,i_k)\\ k=0,\ldots,d}} \wpath_p \sigbar(x,w)_p \bias{k}_{i_k}\\
&\triangleq\bias{d+1}+\langle \wpath , \phi(x,w) \rangle 
\end{align*}

Turning now to convolutional layers, we seek a simple modification that will allow an analogous max-margin formulation. Consider a network consisting of several convolution layers parameterized by $w^{\text{conv}}$, followed by fully-connected layers parameterized by $w$. To generalize, we simply replace treat the convolution embedding of the inputs $\Psi^{\text{conv}}(x,w^{\text{conv}})$ as if they were the inputs themselves within the SVM formulation:
\begin{equation*}
\net(x,w, w^{\text{conv}}) \triangleq \bias{d+1}+\langle \wpath, \phi(  \Psi^{\text{conv}}(x,w^{\text{conv}}), w)
\end{equation*}
Given that we expect the initial convolutional layers to quickly arrive at certain edge-detecting low level filters that are generically useful, this treatment of the convolutional output as if it were a fixed input may be somewhat justifiable. Most importantly, this simple modification does in fact yield experimental results for convolutional networks that are similar to those we find for fully-connected. 

\subsection{Relevance of the Max-Margin Assumption}\label{app:Aone}

The value of an assumption is in its implications and relevance. If theoretical work in this paper shows the former, this section is aimed at demonstrating the later. There is a bit of nuance in that "relevance" is to be distinguished from "validity". That is, \Aone{} is not a "conjecture". It is not something that we are supposing applies exactly to unregularized deep learning models as they are. That is unclear. However, this section will show that empirically, trained deep network models and their max-margin counterparts behave extremely similarly.

What then is the value of analyzing max-margin networks without first establishing the validity of \Aone{} theoretically? It turns out that analyzing the consequences of \Aone{} is easier than establishing its validity (if true). Furthermore, it is useful to know ahead of time that \Aone{} has theoretical consequences before undertaking the task of trying to prove it. Such a study should require additional assumptions about the training data and the initialization, and it is not clear at this time what those should be. 

Secondly, we should not fall into the trap of thinking of deep learning as a fixed phenomenon for observational study only. 
As engineers trying to build better models, we can make it as we like. If it turns out that \Aone{} is not yet strictly speaking true but has interesting theoretical consequences, then we may modify the training process so that the trained network \textit{is} a max-margin network. It is also not clear right now what the best way to do that is. Though as we shall see, these max-margin models would not represent a huge divergence from current deep learning models. Instead our results indicate the two are quite similar.

Consider a comparison of the two functions $\prednet(\cdot,w)$ and the associated max-margin classifier on the same data with feature map $\phi(\cdot,w))$. We compare these functions by comparing the value they return on a finite set of inputs using one of two strategies. The first approach, taken in Figure \ref{fig:vis_compare}, is to train each on input data that is merely $2$ dimensional so that the decision boundary can actually be visualized by evaluating on a grid of input points. The second approach, taken in Table \ref{tab:val_compare}, is to use more realistic input data for training, such as CIFAR-10, but to compare outputs on validation data instead. Though this will not imply that the two functions are equal everywhere, if we are interested using the max-margin assumption for generalization theory, then high probability agreement on support of the data distribution is sufficient.

For the first approach in Figure \ref{fig:vis_compare}, we designed $3$ toy data sets and trained a fixed $9$ layer fully-connected (FC) network on each of them, obtaining weights $w$ and classifier $\prednet(\cdot,w)$. Then we used the scikit-learn library
to train a max-margin linear classifier on the image of the same training data under the embedding map $\phi(\cdot,w)$ for the same weights $w$. More details available in the appendix \ref{app:experiment}. Optically, the decision boundaries of the DNNs the left column \ref{vis:col-left} trained by back propagation and their max-margin counter parts in the right column \ref{vis:col-right} are quite similar. Where the decisions of the two classifiers differ, the data samples with very low probability, suggesting that the two have very similar generalization error.

\begin{figure}
	\centering
	\def\nsbwid{0.4\columnwidth}
	\def\nsbht{\columnwidth}
	\def\negvert{-0.5cm}
	\begin{subfigure}[b]{\nsbwid}
			\centering
			\includegraphics[width=\linewidth]{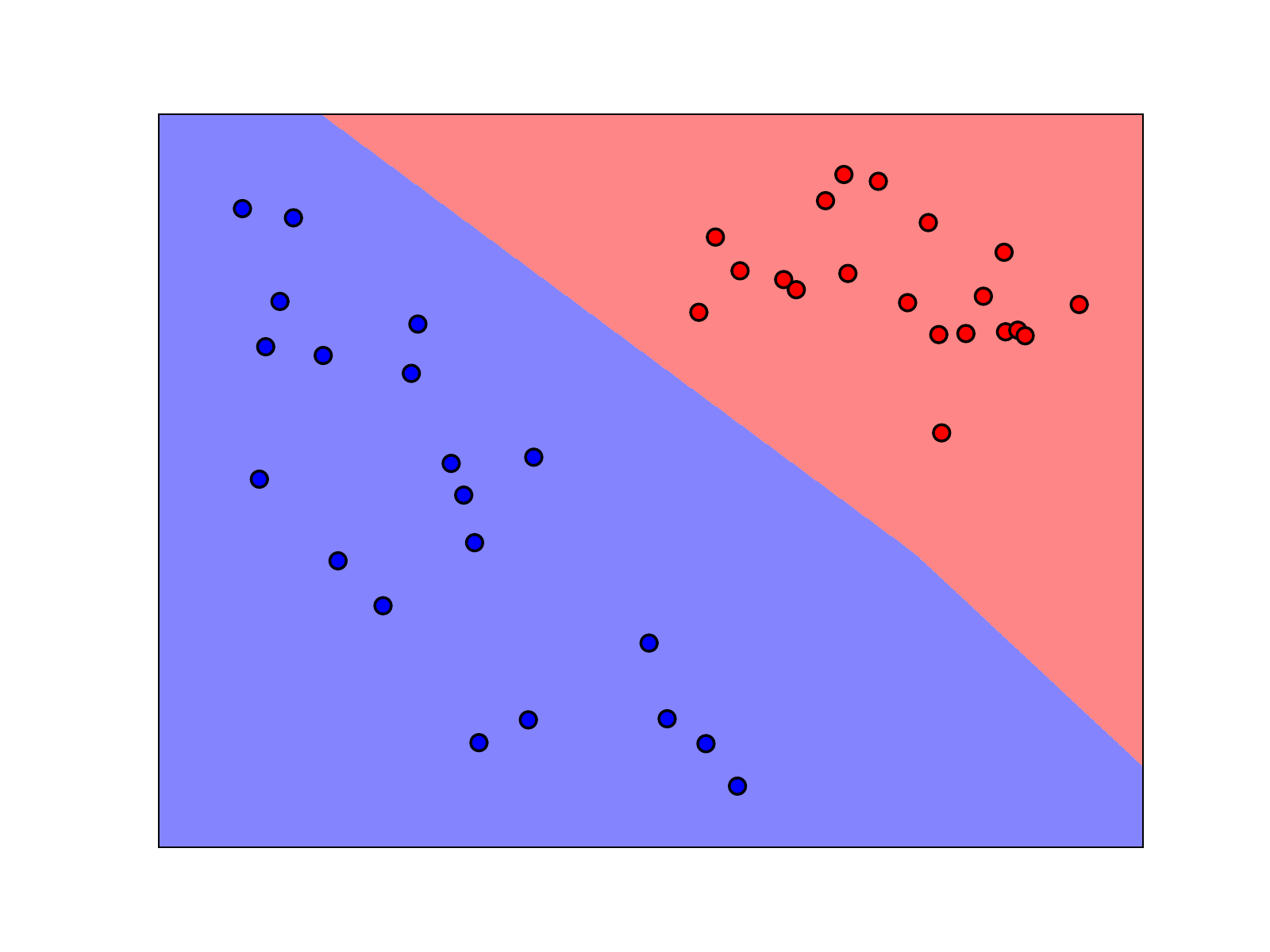}
	\end{subfigure}
	\hfill
	\begin{subfigure}[b]{\nsbwid}
			\centering
			\includegraphics[width=\linewidth]{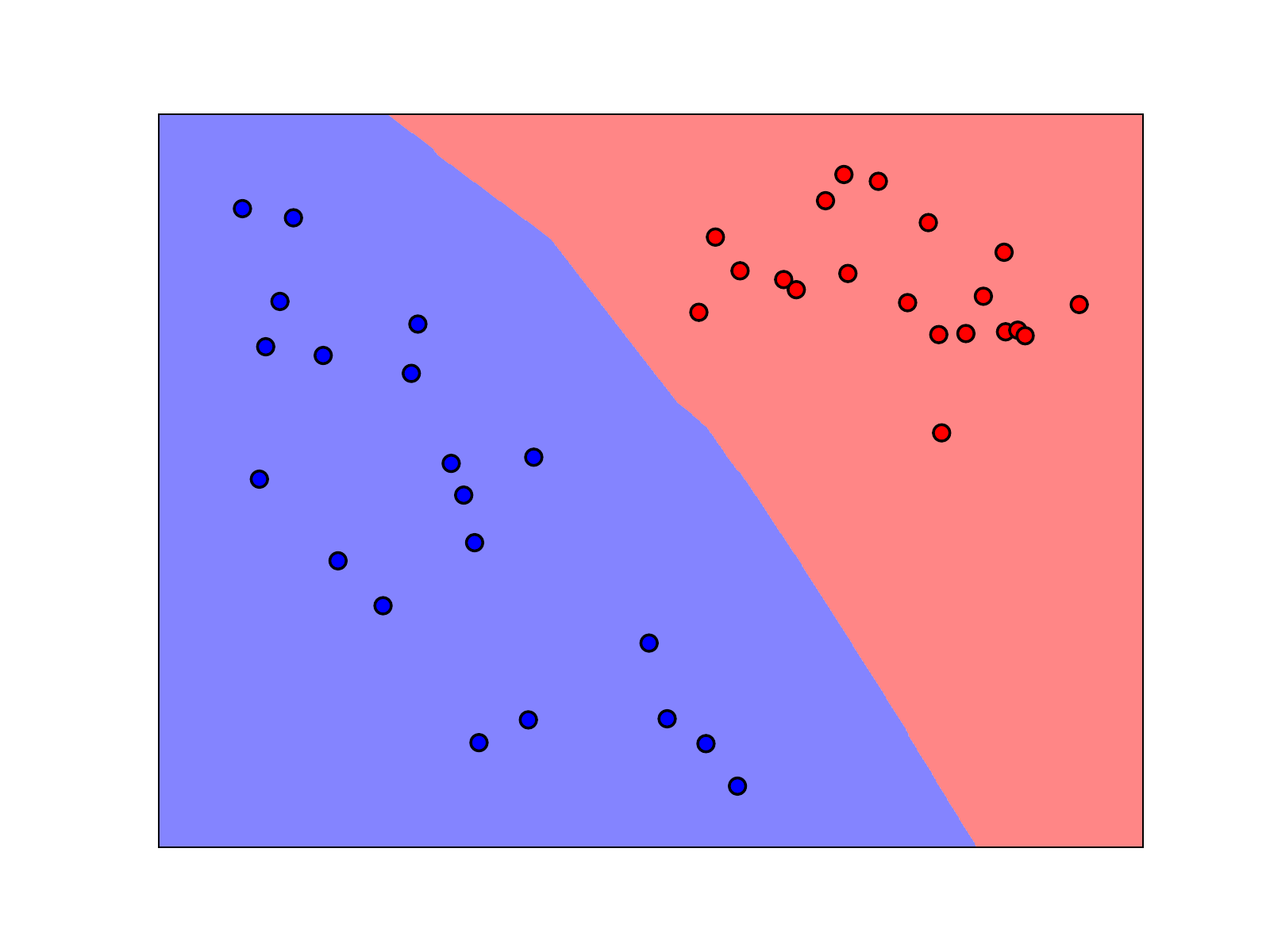}
	\end{subfigure}
	\\[\negvert{}]	
	\begin{subfigure}[b]{\nsbwid}
		\centering
		\includegraphics[width=\linewidth]{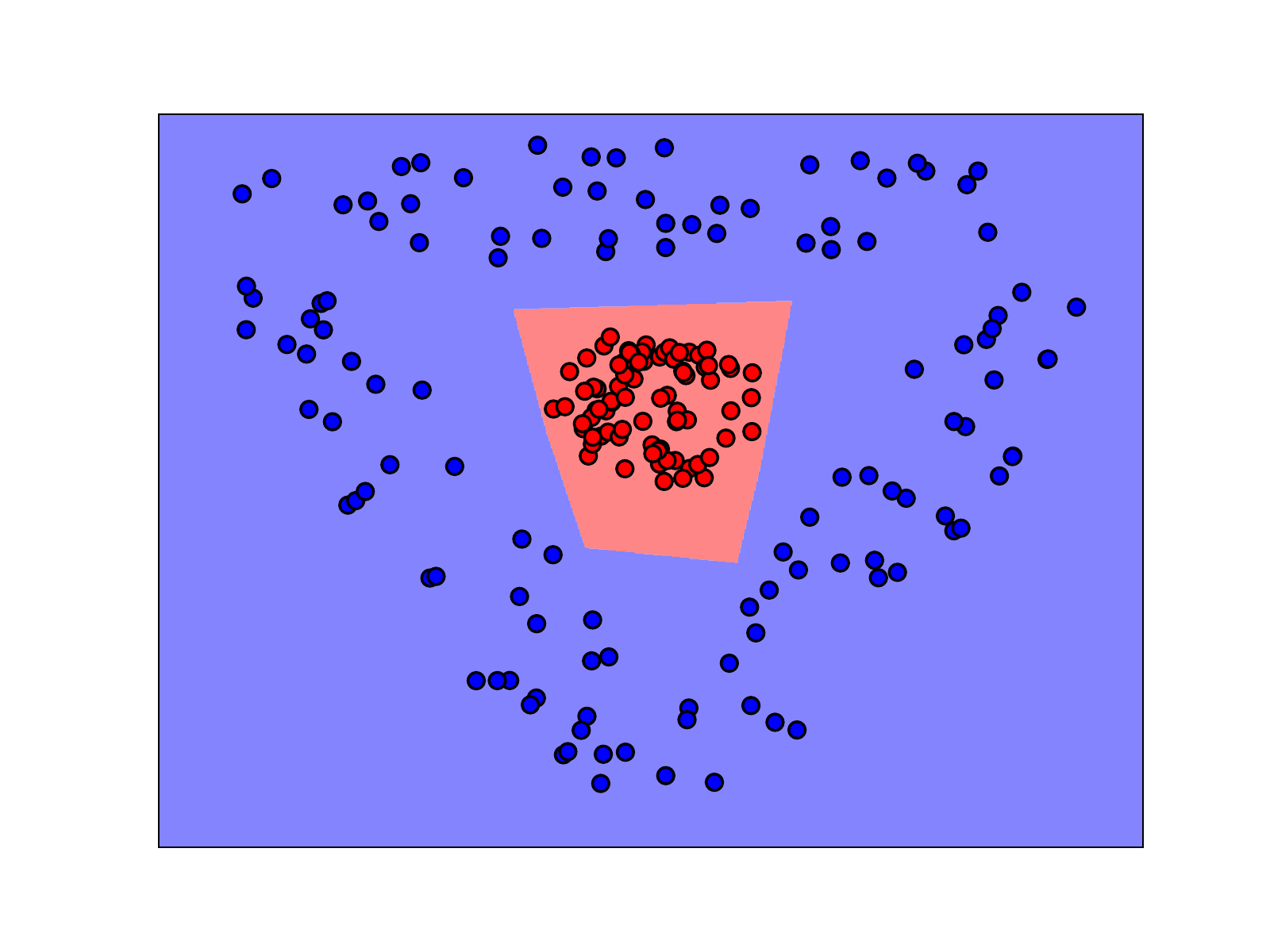}
	\end{subfigure}
		\hfill
	\begin{subfigure}[b]{\nsbwid}
		\centering
		\includegraphics[width=\linewidth]{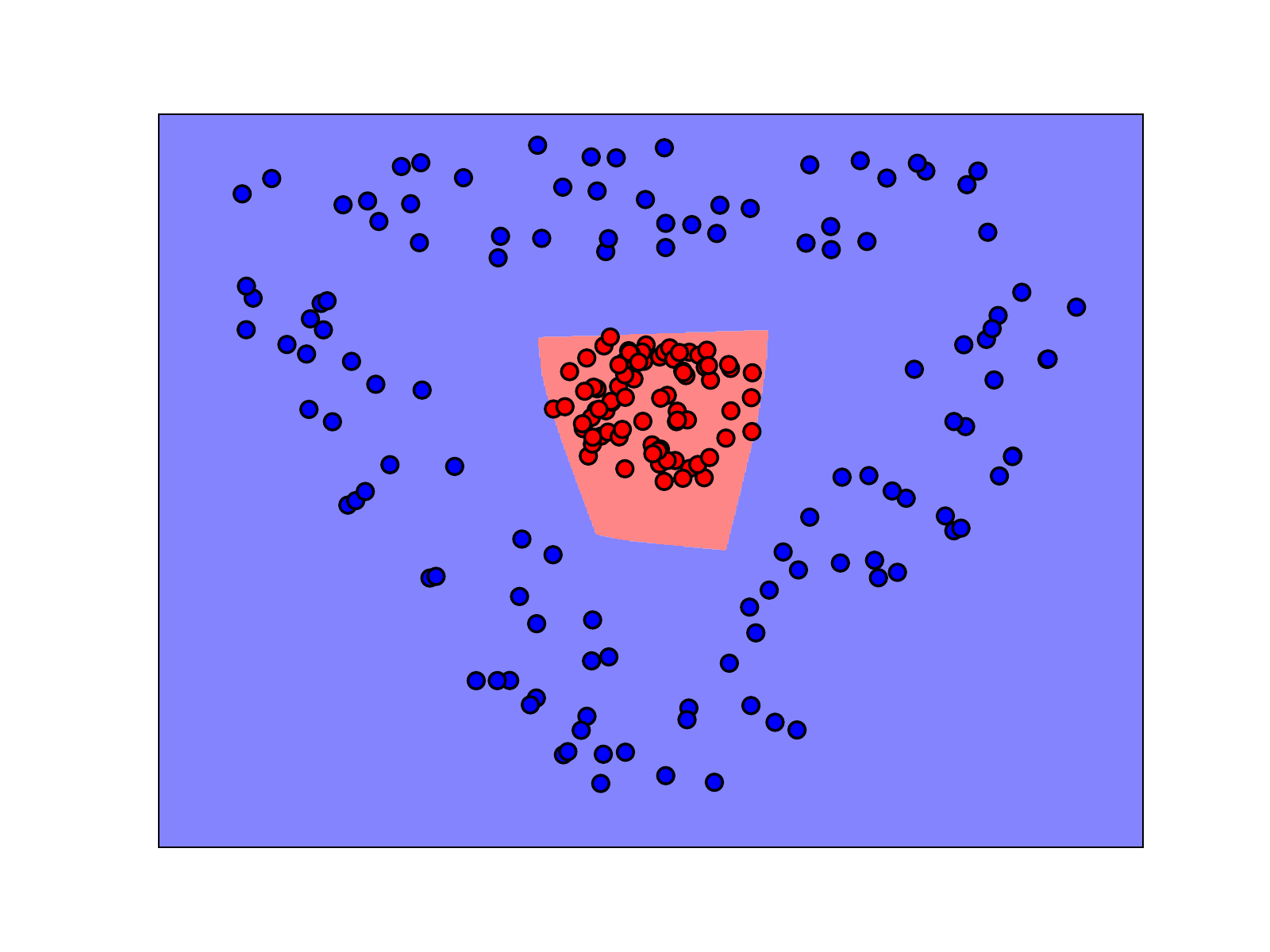}
	\end{subfigure}\\[\negvert{}]
	\begin{subfigure}[b]{\nsbwid{}}
	\centering
	\includegraphics[width=\linewidth]{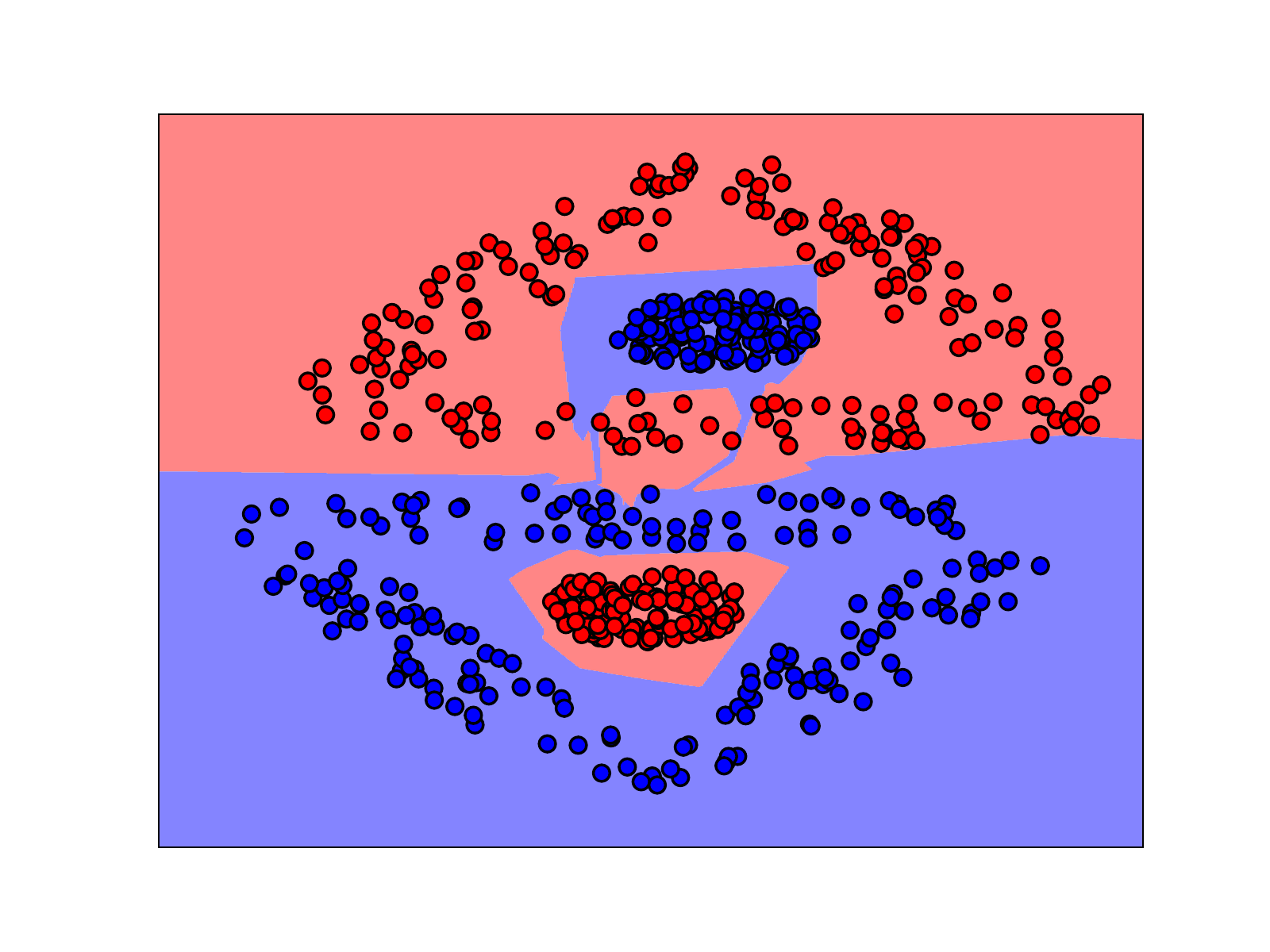}
	\subcaption{ Learned DNN Classifier}
	 \label{vis:col-left}
	\end{subfigure}
	\hfill
	\begin{subfigure}[b]{\nsbwid}
		\centering
		\includegraphics[width=\linewidth]{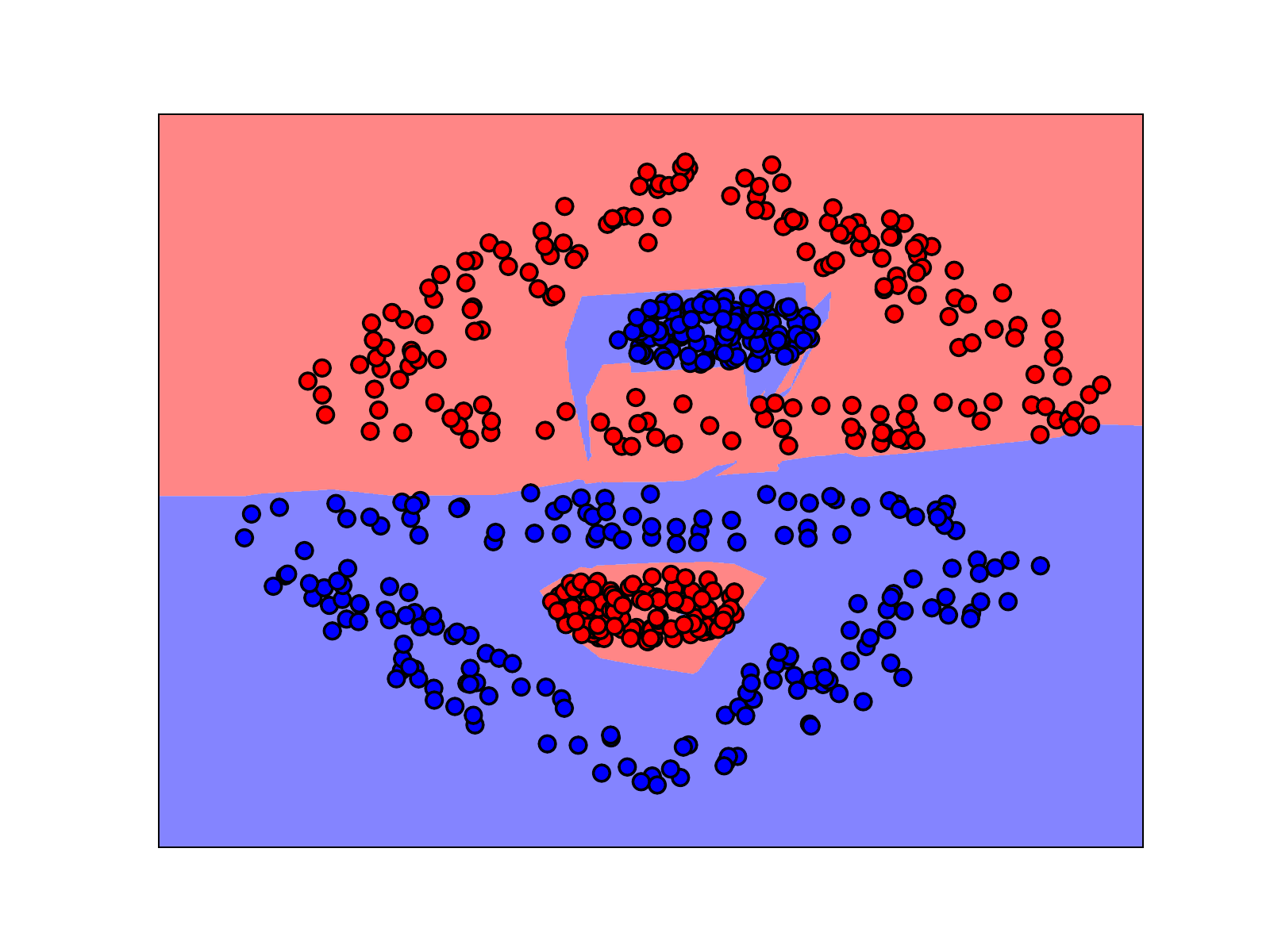}
		\subcaption{  Max-Margin Classifier}
		 \label{vis:col-right}
	\end{subfigure}
	\caption{Displaced in each row is a  visual comparisons between the DNN classifier(Column \ref{vis:col-left}) and the max-margin classifier(Column \ref{vis:col-right}). Each row corresponds to a different training dataset, and each image corresponds to a different classifier. Blue[red] circles represent training data with positive[negative] labels.  Blue[red] regions in a particular image represent inputs assigned positive[negative] label by the corresponding classifier. The DNN classifiers, $\prednet$ in the left column are obtained by gradient descent on the displayed training data, $\trainset$, to learn a set of weights $w$. These weights are fixed for the entire row and define our feature embedding $\phi(\cdot,w)$. This embedding map is used to train a max-margin classifier on the training data $\trainembed$, which is displayed in the right column. Further experimental details can be found in Section \ref{app:experiment}. Although a $2$ dimensional input space is far from a general setup, at least in this setting that we are able to visualize the max-margin classifier and $\prednet$ models appear visually very similar. Where there are differences in the decision boundary, those differences do not appear in the vicinity of the training data (and perhaps, the data distribution).}
	\label{fig:vis_compare}
\end{figure}

In the second approach, we classify Frogs vs Ships on a binarized CIFAR-10 dataset using a convolutional layer network. We varied the number of fully-connected(FC) layers following the initial $5$ layers of alternating convolution and max pooling. The range of depths we chose was determined by technical constraints and more details are available in \ref{app:experiment}. By design, the DNN had perfect training accuracy, and therefore, so did the max-margin classifier. When we compared the two classifiers across a range of depths (Table \ref{tab:val_compare}), not only were the validation accuracies very similar, but also the \textit{same} samples were misclassified by both models, whose predictions agreed more than $99\%$ of the time. 

\begin{table}[h]
	\def\nconv{5}
	\newcommand{\lyrnum}[1]{\nconv+#1}
	\caption{DNN vs Max-Margin on CIFAR Validation Data}\label{tab:val_compare}
	\begin{center}
		\begin{tabular}{cccc}
			\centering
			\textbf{(FC) Depth}  &\textbf{Score Net} &\textbf{Score Max-Margin} &\textbf{Prob Agreement} \\
			\hline \\
			3& 0.968 & 0.964 & \textbf{0.990}\\
			4& 0.976 & 0.973 & \textbf{0.992}\\
			5& 0.972 & 0.972 & \textbf{0.993}\\
			6& 0.974 & 0.974 & \textbf{0.995}\\ 
			7& 0.971 & 0.969 & \textbf{0.994}\\
		\end{tabular}
	\end{center}
\end{table}

We can make a final, clever attempt to empirically invalidate \Aone{}. (One can never empirically \textit{validate} a hypothesis). If the max-margin assumption were actually true, what else would we expect to see? We seek to exploit the fact that for every $x$, $\sigbar(x,w)$ is coordinate-wise positive, since each entry is a product of $\beta=0.1$ and $\gamma=1.0$, each raised to various powers that depend on $x$ and $w$. We observe that when $\sampleyj \samplexj$ is also coordinate-wise positive for each training sample, $(\samplexj,\sampleyj)$, so too is each embedded, $\sampley{j}\phi(\samplex{j},w)$. Suddenly, we have a seemingly strong conclusion: since the max-margin classifier is in the positive linear combination of the $\{\sampley{j}\phi(\samplex{j},w)\}_{j=1}^m$, we see that \Aone{} implies that $\wpath$ is coordinate-wise positive.

Yet, experimentally we can reproduce this theoretical implication. We consider training data $\trainS\subset \bbR^2$ organized by label into the $1^{st}$ and $3^{rd}$ quadrants so that for each training datum $(\samplex{j},\sampley{j})$ is coordinate-wise positive. The weights obtained from training (without biases) with Leaky-ReLU nonlinearity on the described data are shown graphically in Figure \ref{fig:wbarpos_weights} (More details and training data can be found in appendix. The decision boundary of this network is shown in Figure \ref{fig:wbarpos_dataclassif}.

\begin{figure}[h]
	\centering
	\includegraphics[width=\linewidth]{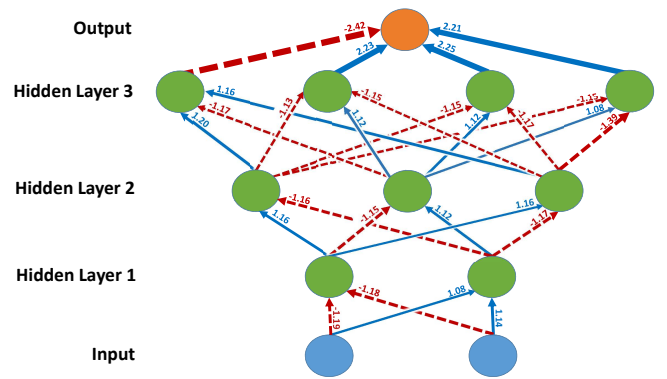}
	\caption{Learned network weights after training on data with positive samples in the $1^{st}$ quadrant and negative samples in the $3^{rd}$ quadrant. Negative [positive] weights are represented by red dotted [blue solid] arrows [respectively]. Thicker arrows correspond to weights of larger magnitude. The finding is that each path from any input feature to the output contains an even number of red arrows (negative weights). This coordination of weight signs across layers is a striking feature of training that is implied by Assumption \ref{asm:mm_relax}, but is not readily explained otherwise. 
	}
	\label{fig:wbarpos_weights}
\end{figure}

\begin{figure}[h]
	\centering
	\includegraphics[width=\linewidth]{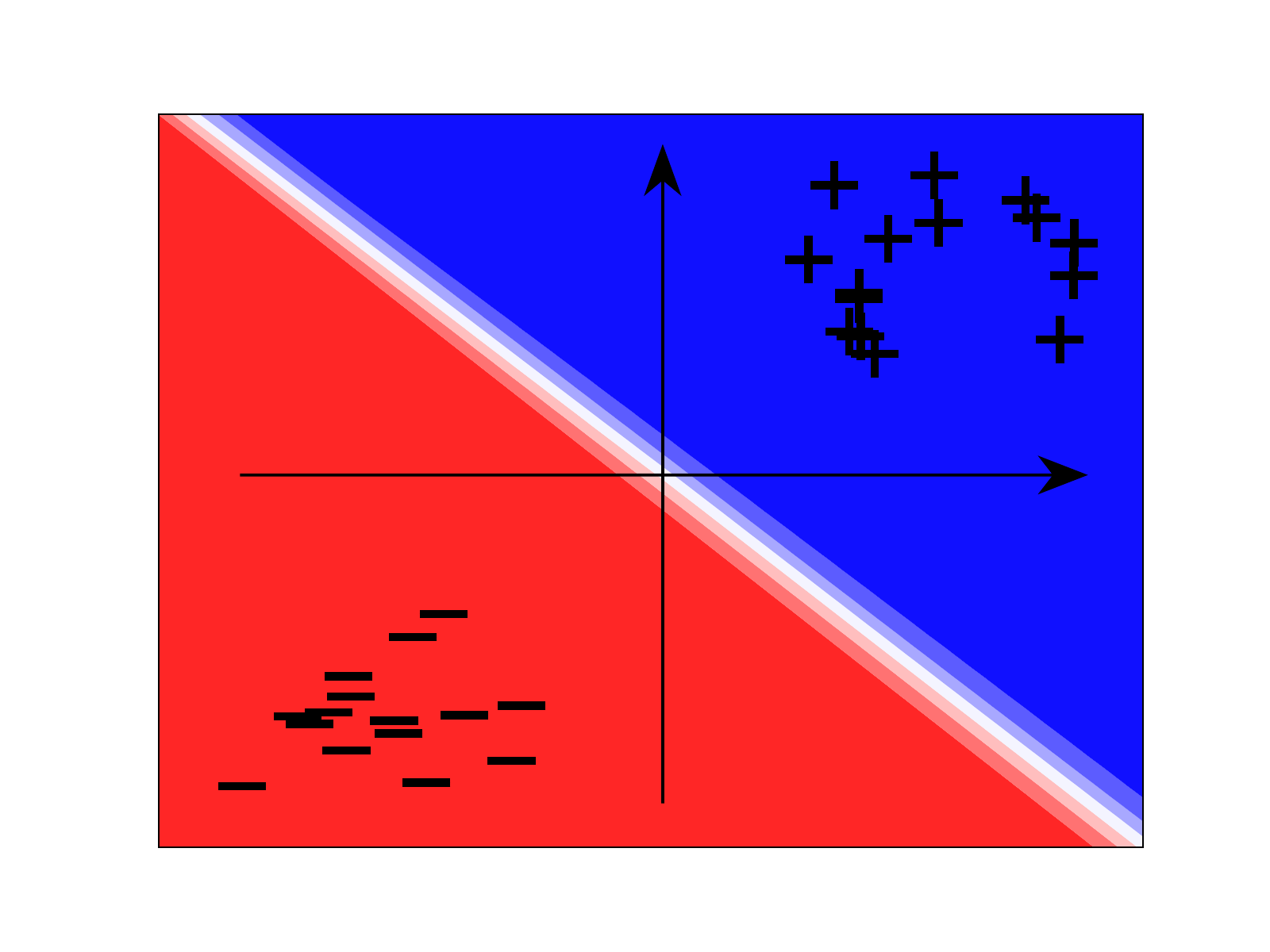}
	\caption{Learned decision boundary and training data corresponding to the learned weights in Figure \ref{fig:wbarpos_weights}. "Black plus [minus] signs correspond to locations of positively [negatively] labeled training data. Blue [red] regions correspond to positive [negative] evaluations by the network.}
	\label{fig:wbarpos_dataclassif}
\end{figure}

Through close inspection of Figure \ref{fig:wbarpos_weights}, we see that every path from the input to output traverses an even number of negative weights, which is of course equivalent to $\wpath$ being coordinate-wise positive. Not only is this an immediate consequence of Assumption \ref{asm:mm_relax}, it is not clear to us through any other theoretical lens that we are aware of. Notice also that this regularizing structure of the weights is not nearly so apparent when the weights are conceptually grouped by layer instead of across paths.

\subsection{Experiment Details}\label{app:experiment}
Concerning the experiment described in Figures \ref{fig:wbarpos_weights} and \ref{fig:wbarpos_dataclassif}, 
we use a truncated normal weight initialization centered around $0$ with $0.025$ standard deviation. We train with gradient descent for $15000$ iterations with a learning rate of $0.005$. Our nonlinearity, Leaky-ReLU, has slopes $\beta=0.1$ and $\gamma=1.0$.

The primary finding from the experiment, $\wpath> 0$, happens reliably as long as the weight initialization and learning rate are suitably small. Just as we are not claiming Assumption \ref{asm:mm_relax} always holds, we are also not claiming that $\wpath>0$ always holds exactly under all related circumstances. For example, if the weight initialization is too large, it is possible to have some few very small weights with signs that do not agree with $\wpath>0$, though the entries of $\wpath$ with largest magnitude will all have the same positive sign. Optically, it seems like the gradient can become small too quickly to overcome a large initialization of a given weight with the "wrong" sign. Though, a complete analysis of this phenomenon is not part of the scope of this work.

For Figures \ref{fig:sv_vs_m}, 
\ref{fig:sv_vs_width}, and 
\ref{fig:sv_vs_depth}, the setup is slightly different. 
We train a fully connected neural network with nonlinearity $\rho(x)=ReLU(x)$ ($ReLU(x)=\max\{0,x\}$) using SGD with momentum parameter $0.05$, learning rate $0.01$, and batch size $100$. We train on "flattened" MNIST images ($f=28\times28$) with labels grouped into the binary classes $\{0,1,2,3,4\}$ and $\{5,6,7,8,9\}$. Unless explicitly varied in the figure, we use a fixed, random subset of $m=20000$ training samples and an architecture consisting of $d=3$ hidden layers of uniform width, $\width=16$. All experiments displayed actually achieved $0$ training error. The reason we use only $2/5^{ths}$ of the training data is because  achieving \textit{exactly} $0$ training error with every architecture considered is necessary to compare the number of support vectors and difficult to do with the entire training set.

Once we train the network to learn $w$, we experimentally determine the set of network support vectors by running a SVM classifier on the embedded data defined by the \textit{fixed} feature map $x\mapsto\phi(x,w)$. To match the constraints.svm.SVC function in  the scikit-learn library, we use hinge loss and regularization constant $C=1e^{-5}$. We argue though that when the training error is identically $0$ and the data is linearly separable, the SVC model with hinge loss will return the maximum margin classifier independently of the value of $C$. This is because for any $C$, the weights are eventually near the optimum where none of the constraints are active. This agrees with what we see experimentally when we varied $C$ (not shown).

The data points in Figures \ref{fig:sv_vs_width} and \ref{fig:sv_vs_depth} representing the number of support vectors vs width and depth are all averages of $3$ trials. One tricky experimental detail is that neural network models have to be trained for a very long time, sometimes upwards of $100$ epochs, in order to get \textit{exactly} $0$ training error needed to guarantee linear separability. This is especially true for the larger width and larger depth runs.

When we randomize the labels, as in Figure \ref{fig:noisy_label_sv_vs_m}, we are determining \textit{every} sample label by a fair coin flip once before training starts, then fixing that label during training.

For the comparison of max-margin classifier in Figure \ref{fig:vis_compare}, three different synthetic datasets were generated by random sampling. The idea in the choice of distributions was to give a variety of both "easy" and "difficulty" 2-dimensional classification tasks. The network used was a $9$ hidden layer fully connected network of widths $4,6,8,10,12,14,16,20,30$. The training parameters  used to obtain weights $w$ were identical to above. 

To produce the max-margin classifiers, we used $\phi(\cdot,w)$ as a fixed embedding (corresponding to the learned parameters $w$, and trained a max-margin classifier using the sci-kit learn library. Specifically, we first calculated the kernel matrix for all training samples. Then we trained a linear classifier using $C=1e-5$ and tolerance $1e-5$ without the shrinking heuristic available to the SVC solver.

For the convolutional experiments in Figure \ref{fig:cifar_sv_vs_depth} and Table \ref{tab:val_compare}, we used a convolutional network. The first $5$ layers consisted of $3$ convolutional layers of $64$ filters each, interleaved by $2$ max pooling layers. The convolutional layers used $3x3$ kernels with a stride of $1$, and the max pooling layers took the maximum over $2x2$ regions. This convolutional embedding was flattened. Experimentally, we varied "FC depth", or the number of subsequent fully-connected $64$ neuron layers between this flattened output and the network output.

The dataset, CIFAR-10, was chosen based on suggestion by a reviewer. Because we only study binary classifiers, we restricted ourselves to discriminating "Frogs" from "Ships". This was also simply the first binarization that we tried. A foreseen benefit was also that there would be only $10k$ training samples had either of these labels, which makes running in-memory SVM classification easier.

There were upper and lower constraints on the ranges of the FC depth explored. On the higher end, we found that it was impossible to use a \textit{fixed} learning rate of $0.005$ across a huge range of depths. For large FC depth, the training would become unstable in a way that could be mitigated by decreasing the learning rate.

The lower constraint limiting FC depth $\geq 3$ is slightly curious. Though the network would still have perfect training accuracy, the SVM solver would struggle to find any linear separator. We know theoretically that one must exist (since $\wpath$ is one), but it seems in practice that the SVM solver has trouble finding it for shallow networks.

\def\Skel{Skel}
\def\SkelW{SkelW}
\begin{figure}
\centering
\includegraphics[width=\linewidth]{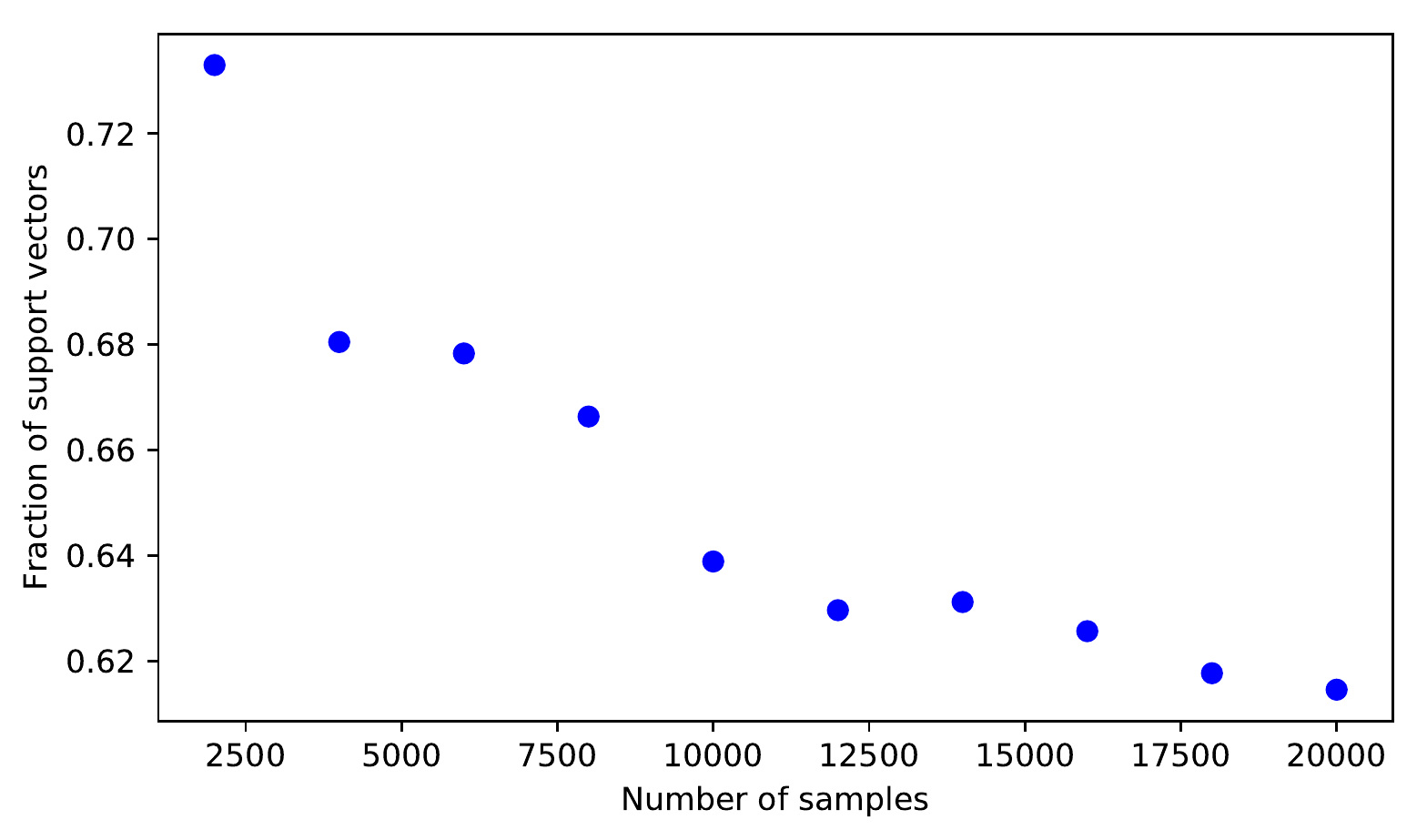}
\caption{Fraction Network Support Vectors $(s/m)$ vs $m$ under Randomized Labels: Once before training, the label of each training datum is replaced by a sample drawn uniformly from $\mathcal{Y}$. Compared to the setting with true labels (Figure \ref{fig:sv_vs_m}), the data appear shifted up vertically by $~0.5$.}
\label{fig:noisy_label_sv_vs_m}
\end{figure}

\subsection{Theorem \ref{thm:skeleton}: The Skeleton and NN Recovery}\label{app:nn_recovery}

\begin{figure}[ht!]
\centering
{\includegraphics[width=0.8\linewidth]
{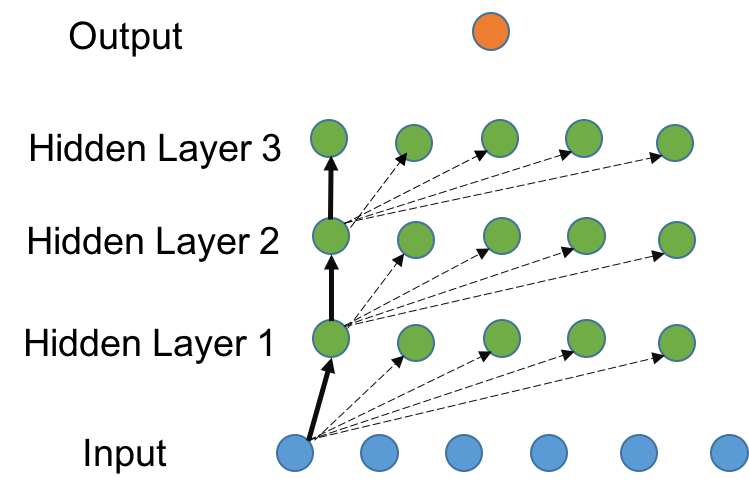}}\\
\caption{ An illustration of one possible collection of edges corresponding to a skeleton (the key ingredient in the proof of Theorem \ref{thm:skeleton}). A "skeleton" is a collection of $n$ 
edges, $\Skel$, with corresponding network weights, $\SkelW$, containing for each neuron one path from some input feature to that neuron. For each $\wpath$, $\SkelW$ is in bijection with $\Lambda^{-1}(\wpath)$. We may imagine the solid black lines to be the "spine" and the dotted lines to be the "ribs", though there are valid configurations that are less anatomic.}
\label{fig:skeleton}
\end{figure}

\Skeleton*

\begin{proof}

Suppose we are given a positive multiple of $\wpath$. We may assume without loss of generality that every neuron $\eta$ belongs to at least some path, $p(\eta)$, with $\wpath_{p(\eta)}\not=0$. If some neuron is not a member of any such path, then it makes no contribution to the function $x\mapsto\net(x,w)=\sum_{p\in Paths}\wpath_p \phi(x,w)_p$. Thus we may drop all such neurons without affecting which functions $\prednet$ are feasible given $\wpath/\norm{\wpath}$. If by removing neurons in this manner we run out of neurons in a single hidden layer, then the bound is trivially true since $\net_w$ must be the zero function.

Thus, for all $1\leq l\leq d$, every neuron $i_1$ in layer $1$ has at least some corresponding index $s_0(i_1)$ in layer $0$ such that $\W{0}_{i_1,s_0(i_1)}\not=0$. Because Leaky-ReLU commutes with positive diagonal matrices, we can rescale column $i_1$ of $\W{1}$ by $|\W{0}_{i_1,s_0(i_1)}|$ and row $s_0(i_1)$ of $\W{0}$ by $|\W{0}_{i_1,s_0(i_1)}|^{-1}$.

We continue renormalizing this way until every row of each of the weight matrices $\W{0}$ through $\W{d-1}$ has at least one weight in $\{-1,1\}$. For each neuron not corresponding to the input or output, fix a particular choice of indices in the previous layer,
$\Skel\triangleq\{(l,i_l,s_{l-1})\}_{l,i_l}$, so that the corresponding weights $\SkelW\triangleq\{\W{l-1}_{i_l,s_{l-1}} \}_{l,i_l}$ are all in $\{-1,1\}$. Call these indices, $\Skel$, a "skeleton" of the network, and the corresponding weights $\SkelW\subset\{-1,1\}^n$ "skeleton weights". A path $p=(i_d,\ldots,i_0)$ will be said to be "in the skeleton" if $\forall l$ $\W{l}_{i_{l+1},i_l}$ is a skeleton weight. We have shown that as long as every neuron is along some path $p$ with $\wpath_p\not=0$, then the network has a skeleton. (Figure \ref{fig:skeleton} illustrates one such configuration of weights, but is not explicitly used in this proof).

We will show that given $\alpha\wpath$ and a skeleton $Skel$, every choice of skeleton weights determines a different set of weights $w=(\alpha\W{d},\W{d-1},\ldots,\W{0})$ with $\Lambda(w)=\alpha\wpath$. Thus we will show that the set of weights compatible with $\wpath$ are in bijection with the set of $2^n$ possible skeleton weights, up to rescaling of $\W{d}$. Since scaling $\W{d}$ by $\alpha>0$ doesn't change the sign of the network output, we will have at most $2^n$ distinct possible classification functions compatible with some scaling of $\wpath$.

\def\bpath{\bar{b}}
Fix a choice of skeleton $\Skel$ and skeleton weights $\SkelW$. Then for every layer $l$, for every neuron $i_l$ in layer $l$, there is a path $(j_{l-1},j_{l-2},\ldots,j_0)$ from the input to that neuron which stays in the skeleton and for which the product of weights is $\pm1\not=0$. We introduce the notation $\bpath$ by using $\bpath_{j_{l-1},j_{l-2},\ldots,j_0}$ to mean "the product of weights along a particular path within $\Skel$ from the input to a particular neuron":
\begin{equation}
\bpath_{i_l,j_{l-1},j_{l-2},\ldots,j_0}\triangleq \W{l-1}_{i_l,j_{l-1}}\W{l-2}_{j_{l-1},j_{l-2}}\cdots\W{0}_{j_1,j_0}.
\end{equation}

First we show that all of the projection weights, $\W{d}$, are determined up to scale by $\alpha$. For neuron $i_d$ in layer $d$, get a path $j_{d-1},j_{d-2},\ldots,j_0$ from the input to neuron $i_d$ within the skeleton so that $\bpath_{j_{d-1},j_{d-2},\ldots,j_0}\not=0$. Then simply solve
\begin{equation}
\frac{\alpha\wpath_{i_d,j_{d-1},j_{d-2},\ldots,j_0}}{\bpath_{i_d,j_{d-1},j_{d-2},\ldots,j_0}}
=\frac{\alpha\W{d}_{i_d}\W{d-1}_{i_d,j_{d-1}}\cdots\W{0}_{j_1,j_0}}{\W{d-1}_{i_d,j_{d-1}}\cdots\W{0}_{j_1,j_0}}
=\alpha\W{d}_{i_d}.
\end{equation}

We next show how to find any weight. let $l<d$,$i_{l+1},i_l$ be arbitrary. 
From neuron $i_{l+1}$ in layer $l+1$ and neuron $i_l$ in layer $l$ get paths $(i_{l+1},j_{l},j_{l-1},\ldots,j_0)$ and $(i_l,k_{l-1},\ldots,k_0)$ within $\Skel$ with $\bpath_{i_{l+1},j_{l},j_{l-1},\ldots,j_0}$ and $\bpath_{i_l,k_{l-1},\ldots,k_0}$ nonzero. Furthermore, we are guaranteed some indices $e_d,e_{d-1},\ldots,e_{l+2}$ such that $\wpath_{e_d,e_{d-1},\ldots,e_{l+2},i_{l+1},j_{l},j_{l-1},\ldots, j_0}\not=0$ since every neuron is connected to the output through at least some path with nonzero weights. Then we simply solve

\begin{align}
&\frac{\alpha\wpath_{e_d,e_{d-1},\ldots,i_{l+1},i_{l}, k_{l-1},\ldots,k_0}}{\bpath_{i_{l}, k_{l-1},\ldots,k_0}}
\frac{\bpath_{i_{l+1},j_{l},j_{l-1},\ldots,j_0}}		{\alpha\wpath_{  e_d,e_{d-1},\ldots,i_{l+1},j_{l},j_{l-1},\ldots,j_0   }}\\
=&\frac{\alpha\W{d}_{e_d}\W{d-1}_{e_d,e_{d-1}}\cdots\W{l+1}_{e_{l+2},i_{l+1}}\W{l}_{i_{l+1},i_l}}{\alpha\W{d}_{e_d}\W{d-1}_{e_d,e_{d-1}}\cdots\W{l+1}_{e_{l		+2},i_{l+1}}}\\
=&\W{l}_{i_{l+1},i_l}.
\end{align}

\end{proof}

\subsection{Broader Significance Theorem \ref{thm:skeleton} Discussion} \label{app:skeleton_discussion}
While, numerically, $n$ is equal to the number of neurons, in this setting one should think of it as the smallest number of weighted edges in the network graph needed to connect every neuron to the output. Such a subset, we called a skeleton. Similarly, $2^n$ refers not to the number of neuron state configurations, but to the number of sign configurations of the n weights whose edges are in the skeleton.

One nice perspective afforded by the exposition in our paper is that while a Support Vector Machine(SVM) learns a classifier in a feature space, a Neural Network(NN) learns both a classifier, $\Lambda(w)$, and an embedding map, $\phi(x,w)$. The flexibility to learn the embedding can then be seen as an additional mode of expressivity, loosely speaking of course, that is available to NNs but not to SVMs. However, the extent of this flexibility is unclear because both the classifier, $\Lambda(w)$, and the embedding map, $\phi(x,w)$, depend on the weights, i.e., they are entangled. What is the nature of this dependence?

Theorem \ref{thm:skeleton} answers that question completely. Fix any skeleton subgraph. It says each classifier, $\Lambda(w)$, corresponds to only finitely many embedding maps, with one map corresponding to each one of the $2^n$ configurations of weight-signs in that skeleton.

\subsection{PAC-Bayes Background}
In this section, we review the sample compression version of PAC-Bayes bounds, which we will invoke to prove Theorem \ref{thm:pbsc}. We are largely following \cite{Laviolette2007}.

In the PAC-Bayes framework (without sample compression), one typically works with a distribution over classifiers that is updated after seeing the training set $\trainS$. A prior $P$ over $\HypSpace\subset\mathcal{Y}^\mathcal{X}$ is declared before training, without reference to the specific samples in $\trainS$. Then consider another distribution over classifiers, $Q$, called a "posterior" to reflect that it is allowed to depend on $\trainS$. Each posterior $Q$ defines a Gibbs classifier $G_Q$ that makes predictions stochastically by sampling classifiers according to $Q$. Similarly, we define the true risk $R(G_Q)$ and empirical risk $R_S(G_Q)$ of the Gibbs classifier $G_Q$ as

\begin{align}
    &R_{\pdata}(G_Q) = \mathop{\mathbb{E}}_{h\sim Q}[R_{\pdata}(h)]   
    &R_{\trainS}(G_Q) = \mathop{\mathbb{E}}_{h\sim Q}[R_{\trainS}(h)] \nonumber
\end{align}

PAC-Bayes gives a very elegant characterization of the relationship between the true and empirical risks of Gibbs classifiers. Let $KL(Q||P)$ be the Kullback-Leibler divergence between distributions $Q$ and $P$. For scalars $q,p$, define $kl(q||p)$ to be the Kullback-Leibler divergence between Bernoulli $q$ and $p$ distributions. We have the following classical uniform bound over posteriors $Q$ $\forall\delta\in(0,1]$ (Theorem 1 in \cite{Laviolette2007})
\begin{align}
&\Prob{\forall Q\quad \Phi(Q,P,m,\delta) }\geq 1-\delta \nonumber\\
\intertext{where $\Phi(Q,P,m,\delta)$ is the event}
&kl(\risk{\trainS}{G_Q}||\risk{\pdata}{G_Q}) \leq\frac{KL(Q||P) + \ln\frac{m+1}{\delta}}{m}
\end{align}

To relate this to classical bounds 
for finite hypothesis sets, notice that if $P=Unif(\HypSpace)$ and $Q$ is a delta distribution on $\hyp_0\in\HypSpace$, then the PAC-Bayes bound is governed by the ratio $KL(Q||P)=\ln|\HypSpace|$ to $m$. When $\HypSpace$ is not finite, one can still get bounds for stochastic neural network classifiers as in \cite{Dziugaite2017}, or one can convert these bounds into bounds for deterministic classifiers by considering the risk of the classifier which outputs the majority vote over $Q$ (see \cite{Laviolette2007} section 3 for example). We take neither of these approaches, but just mention them for the reader's interest.

Thus far we have discussed "data-independent" priors. We now turn to \cite{Laviolette2007} to discuss priors $P_{\trainS}$ over hypotheses that depend on the training set through a "reconstruction function", $\Ron$, mapping subsets of the training data and some "side-information" to a hypothesis.

The idea is to describe classifiers in terms of a subset of training samples, called a "compression sequence", and an element from some auxiliary set, called a "message". For the moment, consider any arbitrary sequence\footnote{The terminology "sequence" is used here to highlight situations which apply to \textit{any} set of inputs, not just probable ones.} , $T\subset(\mathcal{X}\times\mathcal{Y})^m$. Given $T$, define a set of "allowable messages" $\isp(T)$ so that we have a "reconstruction function", $R:T\times\isp(T)\mapsto \mathcal{Y}^\mathcal{X}$, which sends arbitrary sets of samples $T$, and optionally some side information in $\isp(T)$, to a classifier mapping $\mathcal{X}$ to $\mathcal{Y}$.

Let $I$ be the set of subsets of $[m]$. Considering now our training set $\trainS$, for $\idx\in I$ define $\tset_\idx$ to be the subset of training points at indices $\idx$. 
We now introduce a single (data-dependent) set for the support of our prior and posterior. Define

\begin{equation}
\isp_{\tset}\triangleq \bigcup_{\idx\in I} \isp(\tset_{\idx}).
\end{equation}

\def\domain{I\times\isp_{\tset}}
In the sample compression setting, we sample hypotheses in $\mathcal{Y}^\mathcal{X}$ by sampling $(\idx,z)$ from $\domain$ according to either our ($\tset$-dependent) prior $P_{\tset}(\idx,z)$ or our posterior $Q(\idx,z)$ and passing $(\idx,z)$ to our reconstruction function $R$ to obtain the hypothesis $R(\idx,z):\mathcal{X}\mapsto\mathcal{Y}$.

For the results to follow, we require our prior and posterior to factorize accordingly:

\begin{align}
&P_{\tset}(\idx,z)=P_I(\idx)P_{\isp(\tset_{\idx})}(z) \nonumber\\
&Q(\idx,z)=Q_I(\idx)Q_{\isp(\tset_{\idx})}(z) \nonumber
\end{align}

That is, though the prior does depend on the training set, the marginal prior $P_I$ over subsets $\idx\in I$ does \textit{not}. Also, conditioned on $\idx\in I$, the prior on messages $z\in\isp(\tset_{idx})$ only depends on those training samples, $\tset_\idx\subset\tset$, indexed by $\idx$ and \textit{not} the whole training set. The same factorization is likewise required of $Q$. In fact, we will assume throughout that any distribution on $\domain$ has this factorization.

Given training set $\tset$ and posterior $Q=Q_IQ_{\isp(\tset)}$ (possibly depending on $\tset$) the Gibbs classifier $G_Q$ classifies new $x$ by sampling $\idx\sim Q_I(\idx)$, $z\sim Q_{\isp(\tset_\idx)}(z)$, setting $\hyp=R(\idx,z)$, and returning the label $\hyp(x)$.

In analogy with the data independent setting,
the goal is again to claim that the empirical Gibbs risk $\risk{\tset}{G_Q}$ is close to the true Gibbs risk $\risk{\pdata}{G_Q}$ when $KL(Q||P)$ is small compared to the number of samples $m$. This is the content of Theorem 3 in \cite{Laviolette2007}, which, though more general than we require, we cite verbatim for reference. For example, the theorem uses notation $\bar{Q}$ and $d_{\bar{Q}}$, which will simplify to $\bar{Q}=Q$ and $d_{\bar{Q}}=s$ in our more specialized setting where $P,Q$ have nonzero weight only for $|\idx|=s$. A specialized version to follow:

\begin{theorem}{(Theorem 3 in \cite{Laviolette2007})}\\
For any $\delta\in(0,1]$, for any reconstruction function mapping compression sequences and messages to classifiers, for any $\tset\in(\mathcal{X}\times\mathcal{Y})^m$ and for any prior $P_{\tset}$ on $I\times\isp_{\tset}$, we have 

\begin{align}
&\Prob{\forall Q\quad \Phi(Q,P,m,\delta)  }\geq 1-\delta\nonumber\\
\intertext{where $\Phi(Q,P,m,\delta)$ is the event}
&kl(\risk{\tset}{G_Q}||\risk{\pdata}{G_Q}) \leq  
\frac{KL(\bar{Q}||P) + \ln\frac{m+1}{\delta}}{m-d_{\bar{Q}}}\nonumber
\end{align}
\end{theorem}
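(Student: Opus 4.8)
The plan is to prove this via the standard exponential-moment (change-of-measure) route for PAC-Bayes, modified by the one device special to sample compression: the reconstructed hypothesis depends on part of the data, so empirical error must be measured \emph{off} the compression set to restore independence. Write $h=\Ron(\idx,z)$ for the classifier reconstructed from compression indices $\idx$ and message $z$, let $\bar{\idx}=[m]\setminus\idx$ be the complementary indices, and let $\hat r_\idx(h)$ denote the empirical error of $h$ on the $m-|\idx|$ out-of-compression samples $\tset_{\bar{\idx}}$ (in the specialized regime $|\idx|\equiv s$, so $m-|\idx|=m-s=m-d_{\bar Q}$ is constant). The object driving the argument is the deviation functional $f(\idx,z)\triangleq (m-|\idx|)\,kl\!\left(\hat r_\idx(h)\,\|\,\risk{\pdata}{h}\right)$.

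First I would establish the compression concentration lemma. Conditioned on the compression samples $\tset_\idx$ --- which alone determine the message law $P_{\isp(\tset_\idx)}$ and, given $z$, the hypothesis $h$ --- the complementary samples $\tset_{\bar{\idx}}$ are $m-|\idx|$ fresh i.i.d.\ draws from $\pdata$, independent of $h$. Thus $\hat r_\idx(h)$ is an average of independent Bernoulli$(\risk{\pdata}{h})$ trials, and the classical Maurer-type inequality gives $\expect{\tset_{\bar{\idx}}}{e^{(m-|\idx|)\,kl(\hat r_\idx(h)\|\risk{\pdata}{h})}\mid\tset_\idx}\le (m-|\idx|)+1\le m+1$. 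This is the crux and the only step that uses the compression structure: evaluating error off the compression set is exactly what buys the independence needed for a one-sided moment bound, at the cost of shrinking the effective sample size from $m$ to $m-d_{\bar Q}$. Note that one cannot simply invoke the classical Theorem~1 of \cite{Laviolette2007} here, since $h$ is a data-dependent object and its full-sample empirical risk is not an average of terms independent of $h$.

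Next I would take the data expectation of the prior moment and show it is at most $m+1$. Using the required factorization $P(\idx,z)=P_I(\idx)P_{\isp(\tset_\idx)}(z)$ with $P_I$ \emph{data-independent}, I can pull $\sum_\idx P_I(\idx)$ outside the expectation over $\tset$ (this is precisely why data-independence of the subset marginal is assumed), then condition on $\tset_\idx$, integrate $z\sim P_{\isp(\tset_\idx)}$ and finally $\tset_{\bar{\idx}}$, applying the lemma term by term: $\expect{\tset}{\expect{(\idx,z)\sim P}{e^{f(\idx,z)}}}\le (m+1)\sum_\idx P_I(\idx)=m+1$. Markov's inequality then delivers the single high-probability event $\Prob{\expect{(\idx,z)\sim P}{e^{f(\idx,z)}}\le \frac{m+1}{\delta}}\ge 1-\delta$.

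Finally I would close the loop on that event. For any posterior $Q$, the Donsker--Varadhan change-of-measure inequality gives $\expect{(\idx,z)\sim Q}{f(\idx,z)}\le KL(Q\|P)+\ln\expect{(\idx,z)\sim P}{e^{f(\idx,z)}}\le KL(Q\|P)+\ln\frac{m+1}{\delta}$, holding simultaneously for all $Q$ because the moment term no longer depends on $Q$. On the support of $Q$ the exponent equals the constant $m-d_{\bar Q}$, so the left side is $(m-d_{\bar Q})\,\expect{Q}{kl(\hat r_\idx(h)\|\risk{\pdata}{h})}$; joint convexity of $(q,p)\mapsto kl(q\|p)$ and Jensen let me pull the Gibbs average inside, $\expect{Q}{kl(\hat r_\idx\|\risk{\pdata}{h})}\ge kl\!\left(\expect{Q}{\hat r_\idx}\,\|\,\expect{Q}{\risk{\pdata}{h}}\right)=kl(\risk{\tset}{G_Q}\|\risk{\pdata}{G_Q})$, where the empirical Gibbs risk is read, per the sample-compression convention, as each reconstructed classifier's error on the points outside its own compression set. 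Rearranging yields the stated bound. The main obstacle I anticipate is the bookkeeping behind $\bar Q$ and $d_{\bar Q}$ in the fully general statement: when $Q$ places mass on compression sets of differing sizes the exponent $m-|\idx|$ is no longer constant and there is no common out-of-sample block on which to evaluate every hypothesis, so the reduced posterior $\bar Q$ and its effective compression depth $d_{\bar Q}$ are introduced to legitimize both the constant effective sample size and the Jensen step; in the specialized regime $|\idx|\equiv s$ this difficulty vanishes and $\bar Q=Q$, $d_{\bar Q}=s$.
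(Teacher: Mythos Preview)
The paper does not supply its own proof of this statement: it is quoted verbatim as Theorem~3 of \cite{Laviolette2007} and used as background machinery for the subsequent sample-compression bound (Theorem~\ref{thm:pbsc}). There is therefore nothing in the paper to compare your proposal against.

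That said, your sketch is the standard and correct route for this class of results, and it is essentially the argument in \cite{Laviolette2007}. The three moves --- (i) evaluate empirical error only on the $m-|\idx|$ out-of-compression samples so that, conditioned on $\tset_\idx$, the reconstructed hypothesis is independent of them and Maurer's binomial moment bound applies; (ii) use the data-independence of the marginal $P_I$ to pull the sum over $\idx$ outside the expectation over $\tset$ and obtain $\expect{\tset}{\expect{P}{e^f}}\le m+1$, then Markov; (iii) Donsker--Varadhan plus joint convexity of $kl(\cdot\|\cdot)$ to pass to the Gibbs risks --- are exactly the ingredients of the original proof. Your closing remark about $\bar Q$ and $d_{\bar Q}$ is also on target: in the general theorem the compression-set size is not constant across the support of $Q$, and \cite{Laviolette2007} introduces the reduced posterior $\bar Q$ and effective depth $d_{\bar Q}$ precisely to make the exponent uniform and to give a common out-of-sample block for the Jensen step; the present paper only ever uses the specialization $|\idx|\equiv s$, where, as you note, $\bar Q=Q$ and $d_{\bar Q}=s$, so this subtlety does not arise.
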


In the special case we consider where $P,Q$ have nonzero weight only for $|\idx|=s$, we have the reduction $\bar{Q}=Q$ and $d_{\bar{Q}}=s$. More specialized still, we consider \cite{Laviolette2007} Theorem 9, which specializes to the case where $G_Q$ achieves zero training error. It is \textit{slightly} tighter than simply plugging in $0$ for $\risk{\tset}{G_Q}$ by an additive factor of $\ln(m+1)/m$. Notice in the following that the form of the bound arises because $kl(0||R)=-\ln(1-R)$:

\begin{theorem}{(Special case of Theorem 9 in \cite{Laviolette2007})}\label{thm:lav_simple}\\
Fix $s\leq m$. Let $I_s\subset I$ be the set of $s$-sized subsets of indices $[m]$. For any $\delta\in(0,1]$, for any reconstruction function mapping compression sequences and messages to classifiers, for any fixed prior $P_T$ that defines for every arbitrary sequence $T\in{\mathcal{X}\times\mathcal{Y}}^m$
a distribution on $I_s\times\isp_{\tset}$ , we have 

\begin{align}
&\Prob{\forall\{Q:\risk{\tset}{G_Q}=0\} \quad \Phi(Q,P,m,\delta,s)  }\nonumber\\
&\quad \quad \geq 1-\delta\nonumber\\
\intertext{where $\Phi(Q,P,m,\delta,s)$ is the event}
&\risk{\pdata}{G_Q}\leq 1-exp
\left[-\frac{KL(Q||P_{\trainS}) + \Ln{\frac{1}{\delta}}}{m-s} \right]
\label{eqn:lav_zte}
\end{align}
\end{theorem}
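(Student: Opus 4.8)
The plan is to read this off from the general sample-compression PAC-Bayes inequality (Theorem~3 of \cite{Laviolette2007}, reproduced above), which we will use to prove Theorem~\ref{thm:pbsc}, specialized so that the prior $P_{\tset}$ and every posterior $Q$ are supported on the $s$-sized index sets $I_s$; in that case $\bar Q=Q$ and $d_{\bar Q}=s$, so the denominator becomes $m-s$, and since $kl(0\|r)=-\ln(1-r)$ the $kl$-form collapses to the displayed exponential form. The only thing that is \emph{not} a literal substitution is the sharpening of the additive $\ln\frac{m+1}{\delta}$ appearing in Theorem~3 down to $\Ln{\frac1\delta}$; obtaining that requires re-running the PAC-Bayes argument with the tight exponential moment available in the realizable case rather than merely plugging $\risk{\tset}{G_Q}=0$ into the generic bound. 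So I would give a short self-contained derivation along the following lines.

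For an arbitrary sequence $\tset\in(\mathcal{X}\times\mathcal{Y})^m$ and $(\idx,z)\in I_s\times\isp_{\tset}$, write $h=\Ron(\idx,z)$, let $\hat R_{\setminus\idx}(h)\triangleq\frac{1}{m-s}\sum_{j\in[m]\setminus\idx}\indicator{h(\samplex{j})\neq\sampley{j}}$ denote its error on the $m-s$ samples outside the compression set, and define (with the convention $0\cdot\infty=0$)
\[
Y(\idx,z)\triangleq\indicator{\hat R_{\setminus\idx}(\Ron(\idx,z))=0}\;\bigl(1-\risk{\pdata}{\Ron(\idx,z)}\bigr)^{-(m-s)}.
\]
The key step is to show $\expect{\tset\sim\pdata^m}{\expect{(\idx,z)\sim P_{\tset}}{Y(\idx,z)}}=1$. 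Here one uses that the marginal prior $P_I$ (over compression sets, supported on $I_s$) does not depend on $\tset$, so the $\tset$-expectation may be moved inside $P_I$; then, for a fixed $\idx$, both the message prior $P_{\isp(\tset_\idx)}$ and hence $h=\Ron(\idx,z)$ depend on $\tset$ only through $\tset_\idx$, while the $m-s$ samples indexed by $[m]\setminus\idx$ are i.i.d.\ from $\pdata$ and independent of $h$, so the probability that all of them are correctly classified is exactly $(1-\risk{\pdata}{h})^{m-s}$, which cancels the negative power and leaves $1$ after averaging over $z$ and $\tset_\idx$.

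Given the identity $\expect{\tset}{\expect{P_{\tset}}{Y}}=1$, Markov's inequality produces an event of probability at least $1-\delta$ over $\tset$ on which $\expect{(\idx,z)\sim P_{\tset}}{Y(\idx,z)}\leq\frac1\delta$. On that event I would invoke the change-of-measure (Donsker--Varadhan) inequality $\expect{(\idx,z)\sim Q}{g}\leq KL(Q\|P_{\tset})+\ln\expect{(\idx,z)\sim P_{\tset}}{e^{g}}$ with $g=\ln Y$; since this holds simultaneously for \emph{every} posterior $Q$, the resulting bound is uniform over $Q$. For any $Q$ with $\risk{\tset}{G_Q}=0$ we have $\hat R_{\setminus\idx}(\Ron(\idx,z))=0$ for $Q$-almost-every $(\idx,z)$, hence $g(\idx,z)=-(m-s)\ln(1-\risk{\pdata}{\Ron(\idx,z)})$ $Q$-a.s., and we obtain $(m-s)\,\expect{(\idx,z)\sim Q}{-\ln(1-\risk{\pdata}{\Ron(\idx,z)})}\leq KL(Q\|P_{\tset})+\Ln{\frac1\delta}$. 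Finally, convexity of $t\mapsto-\ln(1-t)$ with Jensen's inequality, together with $\risk{\pdata}{G_Q}=\expect{(\idx,z)\sim Q}{\risk{\pdata}{\Ron(\idx,z)}}$, gives $-(m-s)\ln(1-\risk{\pdata}{G_Q})\leq KL(Q\|P_{\tset})+\Ln{\frac1\delta}$, which rearranges to $\risk{\pdata}{G_Q}\leq 1-\exp\bigl[-\tfrac{KL(Q\|P_{\tset})+\Ln{\frac1\delta}}{m-s}\bigr]$, the claimed bound.

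I expect the mean-one computation $\expect{\tset}{\expect{P_{\tset}}{Y}}=1$ to be the main obstacle, since it is the one place where the imposed factorization must be exploited carefully: the $\tset$-independence of $P_I$ is exactly what lets the outer expectation commute past the choice of compression set, and the fact that a reconstructed hypothesis depends only on its $s$ compression points (and not on the held-out $m-s$) is exactly what makes the held-out mistakes a sum of i.i.d.\ Bernoullis independent of $h$. Everything afterward (Markov, change of measure, Jensen) is routine; note in particular that the union over the $\binom{m}{s}$ possible compression sets never appears explicitly, as it is absorbed into the $KL(Q\|P_{\tset})$ term --- which is precisely the benefit of casting sample compression in PAC-Bayes form.
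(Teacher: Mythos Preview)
The paper does not prove this statement; it simply cites it as Theorem~9 of \cite{Laviolette2007}, remarking only that the displayed form follows from $kl(0\|R)=-\ln(1-R)$ and that it is ``\textit{slightly} tighter than simply plugging in $0$ for $\risk{\tset}{G_Q}$ by an additive factor of $\ln(m+1)/m$.'' Your proposal therefore supplies strictly more than the paper does: a self-contained derivation of the realizable sample-compression PAC-Bayes bound.

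Your argument is correct and is the standard route (exponential moment $\to$ Markov $\to$ Donsker--Varadhan $\to$ Jensen). You are also right to flag that the improvement from $\ln\frac{m+1}{\delta}$ to $\Ln{\frac{1}{\delta}}$ cannot be obtained by literal substitution into Theorem~3 and requires redoing the moment computation with the realizable-case random variable $Y$; the paper acknowledges this gap but defers its resolution to \cite{Laviolette2007}. One small quibble: your claim $\expect{\tset}{\expect{P_{\tset}}{Y}}=1$ should strictly be $\leq 1$, since if some reconstructed $h$ has $\risk{\pdata}{h}=1$ then the inner conditional expectation is $0$ rather than $1$; but $\leq 1$ is all Markov needs, so nothing downstream changes.
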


Notice though that
\begin{equation}
0\leq-\ln(1-\Risk)-\Risk\leq\epsilon(\Risk)
\end{equation}
, where $\epsilon(\Risk)\leq 0.03$ for the reasonable operating range, $\Risk\leq 0.2$. Therefore, as a matter of taste, in place of Equation \ref{eqn:lav_zte} in the above theorem we can claim the (very slightly) weaker but notationally more compact bound:  
\begin{align}
&\risk{\pdata}{G_Q} \leq \frac{KL(Q||P_{\trainS}) + \Ln{\frac{1}{\delta}}}{m-s}\label{eqn:lav_zte_compact}
\end{align}

In fact as long as  for in the range of $\Risk$, we would
Now we are ready to prove our main theorem.

\subsection{Theorem \ref{thm:pbsc}:A Neural Network Sample Compression Bound}\label{app:pbsc}

\def\idxhat{\textbf{i}_{\net}}
We restate and prove Theorem \ref{thm:pbsc} from Section \ref{sec:pbsc}.
\PBSC*

\begin{proof}
We start by defining without reference to a training set: our reconstruction function, our base space, and a fixed prior $P_T$ for every possible sequence $T\subset(\mathcal{X}\times\mathcal{Y})^m$.

Let $T\in(\mathcal{X}\times\mathcal{Y})^m$ be arbitrary.
Let $\Is$ be the set of subsets of $s$ elements from $[m]$. Let $\isp^\sigma(T)$ be the set of tuples of neuron states for inputs $T$ that are achievable with at least some network weights: $\isp^\sigma(T)\triangleq\{(\sigbar(x,v))_{(x,y)\in T}:v\in\WeightSpace\}$.

For future convenience, define a "max-margin conditional", $\cond(T_\idx,\Sigma)$, to be "True" iff there exist a nonempty set of weights $\WeightSpace(T_\idx,\Sigma)\subset\WeightSpace$ such that $\forall v\in\WeightSpace(T_\idx,\Sigma)$:
(1) $\Sigma=(\sigbar(x,v))_{(x,y)\in T}$
and (2) $\Lambda(v)$ is the max-margin classifier for $\{(\phi(x,v),y)\}_{(x,y)\in T}$. Put $\kappa(T_\idx,\Sigma)=|\{\net^{\sign}_v :v\in\WeightSpace(T_\idx,\Sigma)\}|$ to be the number of neural network classifiers obtained from some model parameter in $\WeightSpace(T_\idx,\Sigma)$. Note that $\kappa(T_\idx,\Sigma)\leq 2^n$ by Theorem \ref{thm:skeleton}.

For $\Sigma\in\isp^\sigma(T)$, if $\cond(T_\idx,\Sigma)$ is True, put $\isp^\skisp(T,\Sigma)=[\kappa(T_\idx,\Sigma)]$ and put $\isp^\skisp(T,\Sigma)=[1]$ otherwise.

Let our prior $P_T(\idx,\Sigma,\pth)$ have support on $\Is\times\isp^\sigma_T\times\isp^\skisp_T$, where the component spaces are defined as
\begin{align}
\isp^\sigma_T&\triangleq \bigcup_{\idx\in\Is} \isp^\sigma(T_\idx).\nonumber\\
\isp^\skisp_T&\triangleq \bigcup_{\idx\in\Is \Sigma\in\isp^\sigma(T_\idx)}
\isp^\skisp(T_\idx,\Sigma)\nonumber
\end{align}

where the prior $P_T$ has the factorization $P_T(\idx,\Sigma,\pth)=\Pi(\idx)\Ps_{T_\idx}(\Sigma)\Ppi_{(T_\idx,\Sigma)}(\pth)$, where $\Pi$ is not allowed to depend on the sequence $T$, and each factor distribution is uniform on the corresponding set of allowable messages:
\begin{align}
\Pi &= \Uniform(\Is)\nonumber\\
\Ps &= \Uniform(\isp^\sigma(T_\idx))\nonumber\\
\Ppi &= \Uniform(\isp^\skisp(T_\idx,\Sigma))\nonumber\\
\end{align}

Our reconstruction function maps each $(\idx,\Sigma,\pth)\in\Is\times\isp^\sigma_T\times\isp^\skisp_T$ to a classifier as follows: if $\cond(T_\idx,\Sigma)$ is True, $\Ron(T_\idx,\Sigma,\pth)$ returns the $\pthh$
classifier, in $\{\prednet :w\in\WeightSpace(T_\idx,\Sigma)\}$ (any total ordering on network classifiers $\{\net^{\sign}_v:v\in\WeightSpace\}$, can be used to clarify the meaning of $\pthh$). Else if $\cond(T_\idx,\Sigma)$ is False, $\Ron(T_\idx,\Sigma,\pth)$ returns a "dummy" classifier. To make a concrete choice, return the constant classifying function: $\Ron(T_\idx,\Sigma,\pth)=(x\mapsto+1)$ if $\cond(T_\idx,\Sigma)$ False.

Only now, let $\tset$ be a training set sampled from $\pdata^m$. Consider now only the "posterior" distributions $Q$ on $\Is\times\isp^\sigma_{\tset}\times\isp^\skisp_{\tset}$ that satisfy $\support Q\subset\support P_{\tset}$ and factorize according to $Q(\idx,\Sigma,\pth)=\Qi(\idx)\Qs_{\tset_\idx}(\Sigma)\Qpi_{(\tset_\idx,\Sigma)}(\pth)$. Note, in contrast with the prior, here each of $\Qi, \Qs, \Qpi$ are allowed to depend on the samples $\tset$. Let $G_Q$ be the Gibbs classifier which classifies $x$ stochastically by sampling $(\idx,\Sigma,\pth)\sim Q(\idx,\Sigma,\pth)$ and returning $\Ron(\idx,\Sigma,\pth)(x)$.

Then, from Theorem \ref{thm:lav_simple} and Equation \ref{eqn:lav_zte_compact}, we know that $\forall\delta\in(0,1]$,
\begin{align}\label{eqn:pbsc1}
&\Prob{\forall\{Q:\risk{\tset}{G_Q}=0\}\Phi(Q,P,m,\delta,s)}\geq 1-\delta\nonumber\\
\intertext{where $\Phi(Q,P,m,\delta,s)$ is the event}
&\risk{\pdata}{G_Q} \leq \frac{KL(Q||P_{\trainS}) + \Ln{\frac{1}{\delta}}}{m-s}
\end{align}

\def\Qnet{Q_{\net}}
Consider the weights $w$ classifier $\prednet$ we obtain from training the neural network $\net$ on $\tset$. Since the above bound is uniform over all $Q$, if we can find a posterior $\Qnet$ such that $G_{\Qnet}=\prednet$, then we can use Equation \ref{eqn:pbsc1} to bound the true risk $\risk{\pdata}{\prednet}$ of our neural network. Else if we cannot, then Equation \ref{eqn:pbsc1} does not comment on the risk $\risk{\pdata}{\prednet}$. However, we will show that whenever the three assumptions of the theorem hold, we can find such a posterior, and the bound will hold.

Well, by \Aone, we know that $\Lambda(w)$ is the unique max-margin classifier for $\trainembed$. But, since we have also assumed at most $s$ network support vectors, we know that $\Lambda(w)$ is \textit{also} the unique max-margin classifier for some subset of support vectors $\trainSsup\subset\tset$. Since $|\trainSsup|\leq s$, we can get $\idxhat\in\Is$ such that $\trainSsup\subset\tset_{\idxhat}$. Furthermore, there is at least one value $\Sigma_{\net}\in\isp^\sigma_{\tset}$, namely $\Sigma_{\net}\triangleq(\sigbar(x,w))_{(x,y)\in \tset_{\idxhat}}$, for which $\cond(\tset_{\idxhat},\Sigma_{\net})$ is True and $\WeightSpace(\tset_{\idxhat},\Sigma_{\net})\ni w$ is nonempty. Hence, for some $\pthhat\in\isp^\skisp_{\tset}$, $\Ron(\idxhat,\Sigma_{\net},\pthhat)=\prednet$ as desired.

\def\SymSig{Sym(w,\tset_{\idxhat})}
Let $\Qi$ be a distribution on $\Is$ which samples the index set $\idxhat$ with probability $1$. Let $\Qs$ be a distribution on $\isp^\sigma_{\tset}$ which is uniform over the set of activations consistent with $\prednet$:
\begin{align}\label{eqn:perm_neuron}
\SymSig&\triangleq\{\Sigma:\exists v\in\WeightSpace(\tset_{\idxhat},\Sigma)\\ & \quad \text{ with } \net^{\sign}_v=\prednet\}\nonumber\\
\Qs&=\Uniform(\SymSig).
\end{align}
For example, within-layer neuron permutations yield different $\Sigma$ but the same classifier. At last, for each $\Sigma\sim\Qs$, let $\Qpi_{(\tset_\idx,\Sigma)}$ be a distribution on $\isp^\skisp_{\tset}$ placing all of its mass on the (unique) index $\pth_{\Sigma}$ such that $\Ron(\idxhat,\Sigma,\pth_{\Sigma})=\prednet$ as functions. Therefore, $\Qnet\triangleq\Qi\Qs\Qpi$ is a posterior distribution returning $\prednet$ with probability one. Thus the Gibbs classifier $G_{\Qnet}$ is a deterministic classifier and is equal to $\prednet$.

There, we may claim Equation \ref{eqn:pbsc1} holds for posterior $\Qnet$ with probability at least $1-\delta$.

To conclude our theorem, we simply expand and upper bound $KL(\Qnet,P_{\tset})$:
\def\Ai{\Ave{\idx\sim\Qi}}
\def\As{\Ave{\Sigma\sim\Qs}}
\def\Ap{\Ave{\pth\sim\Qpi}}
\begin{align}
&KL(\Qnet||P_{\tset})=\nonumber\\
&=\Ai\As\Ap\ln\left(\frac{\Qi(\idx)\Qs(\Sigma)\Qpi(\pth)}{\Pi(\idx)\Ps(\Sigma)\Ppi(\pth)}\right)\nonumber\\
&=\Ai\ln\left(\frac{\Qi(\idx)}{\Pi(\idx)}\right) + \Ai\As\ln\left(\frac{\Qs(\Sigma)}{\Ps(\Sigma)}\right)\nonumber \\
&\quad+\Ai\As\Ap\ln\left(\frac{\Qpi(\pth)}{\Ppi(\pth)}\right)\nonumber\\
&=\Ln{\choosemany} + \Ai\As\ln\left(\frac{|\isp^\sigma(\tset_{\idxhat})|}{|\SymSig|}\right)\nonumber\\
&+\Ai\As\Ap\ln\left(|\isp^\skisp(\tset_{\idxhat},\Sigma)|\right).\label{eqn:exactKL}
\end{align}

To conclude the proof, we crudely upper bound $\forall\idx$ $|\isp^\sigma(\tset_{\idx})|\leq 2^{ns}$, which follows because at each of $s$ samples $(x,y)\in\tset_{\idxhat}$ and at each neuron, $(l,i_l)$, of $n$ possible neurons, $\sigl(x,w)_{i_l}$ can take one of two values. We also have $|\isp^\sigma(T_\idx)|\leq 2^n$ by Theorem \ref{thm:skeleton}. Clearly, $|\SymSig|\geq 1$. Combining this with Equation \ref{eqn:exactKL}, we have 
\begin{align}
KL(\Qnet||P_{\tset})&\leq \Ln{\choosemany}+\ln(2^{ns}2^n)\nonumber\\
\intertext{where we simply drop $\ln(2)<1$, and approximate $\choosemany\leq (\frac{me}{s})^s$ to arrive at}
KL(\Qnet||P_{\tset})&\leq s\Ln{\frac{m}{s}}+s+ns+n \nonumber
\end{align}
Substituting the above into Equation \ref{eqn:pbsc1} finishes the proof.
\end{proof}

\end{document}